\newcommand{\tabincell}[2]{\begin{tabular}{@{}#1@{}}#2\end{tabular}}
\newcolumntype{C}[1]{>{\centering\arraybackslash$}p{#1}<{$}}
\newcommand{\R}{\mathbb{R}}
\newtheorem{theorem}{Theorem}
\newtheorem{lemma}{Lemma}
\newtheorem{proposition}{Proposition}
\title{Graph Convolution with Low-rank Learnable Local Filters}
\author[1]{Xiuyuan Cheng}
\author[2]{Zichen Miao}
\author[2]{Qiang Qiu\thanks{Email: qqiu@purdue.edu}}
\affil[1]{Department of Mathematics, Duke University}
\affil[2]{School of Electrical and Computer Engineering, Purdue University}
\date{\vspace{-20pt}}
\begin{document}

\vspace{-20pt}
\maketitle

\begin{abstract}

Geometric variations like rotation, scaling, and viewpoint changes pose a significant challenge to visual understanding. One common solution is to directly model certain intrinsic structures, e.g., using landmarks. However, it then becomes non-trivial to build effective deep models, especially when the underlying non-Euclidean grid is irregular and coarse. Recent deep models using graph convolutions provide an appropriate framework to handle such non-Euclidean data, but many of them, particularly those based on global graph Laplacians, lack expressiveness to capture local features required for representation of signals lying on the non-Euclidean grid. The current paper introduces a new type of graph convolution with learnable low-rank local filters, which is provably more expressive than previous spectral graph convolution methods. The model also provides a unified framework for both spectral and spatial graph convolutions. To improve model robustness, regularization by local graph Laplacians is introduced. The representation stability against input graph data perturbation is theoretically proved, making use of the graph filter locality and the local graph regularization. Experiments on spherical mesh data, real-world facial expression recognition/skeleton-based action recognition data, and data with simulated graph noise show the empirical advantage of the proposed model.
\end{abstract}

\section{Introduction}

Deep methods have achieved great success in visual cognition, 
yet they still lack capability to tackle severe geometric transformations such as rotation, scaling and viewpoint changes.
This problem is often handled by conducting data augmentations with these geometric variations included, e.g. by randomly rotating images, so as to make the trained model robust to these variations. However, this would remarkably increase the cost of training time and model parameters. 
Another way is to make use of certain underlying structures of objects, e.g. facial landmarks  \cite{face_ldmk_blessing} and human skeleton landmarks  \cite{body_ldmk_liegroup},
c.f. Fig. \ref{fig:diag1} (right). 
Nevertheless, 
these methods then adopt hand-crafted features based on landmarks, 
which greatly constrains their ability to obtain rich features for downstream tasks.
One of the main obstacles for feature extraction
is the non-Euclidean property of underlying structures,
and particularly, 
it prohibits the direct usage of prevalent convolutional neural network (CNN) architectures \cite{resnet,densenet}. 
Whereas there are recent CNN models designed for non-Euclidean grids, 
e.g., for spherical mesh \cite{ugscnn, s2cnn,spherenet}
and manifold mesh in computer graphics \cite{ bronstein2017geometric, splinecnn}, 
they mainly rely on partial differential operators which only can be calculated precisely on fine and regular mesh, 
and may not be applicable to the landmarks which are irregular and course.
Recent works have also applied Graph Neural Network (GNN) approaches to coarse non-Euclidean data, 
yet methods using GCN \cite{gcn} 
 may fall short of model capacity, 
 and other methods adopting GAT \cite{gat} are mostly heuristic and lacking theoretical analysis. 
 A detailed review is provided in Sec. \ref{sec:litreview}.

In this paper, we propose a graph convolution model, called {\it L3Net},  
originating from low-rank graph filter decomposition, c.f. Fig. \ref{fig:diag1} (left).
The model provides a unified framework for graph convolutions, 
including  ChebNet \cite{chebnet}, GAT, EdgeNet \cite{isufi2020edgenets}
and CNN/geometrical CNN with low-rank filter as special cases.
In addition,
we theoretically prove that 
L3Net is strictly more expressive to represent graph signals than 
spectral graph convolutions based on global adjacency/graph Laplacian matrices, 
which is then empirically validated,
%like Chebnet, 
c.f. Sec. \ref{subsec:expressive}.
We also prove a Lipschitz-type representation stability of the new graph convolution layer using perturbation analysis.

\begin{figure}[tb]
\hspace{-12pt} 
\centering{
\includegraphics[trim={12pt 520pt 533pt 10pt},clip,height=.185\linewidth]{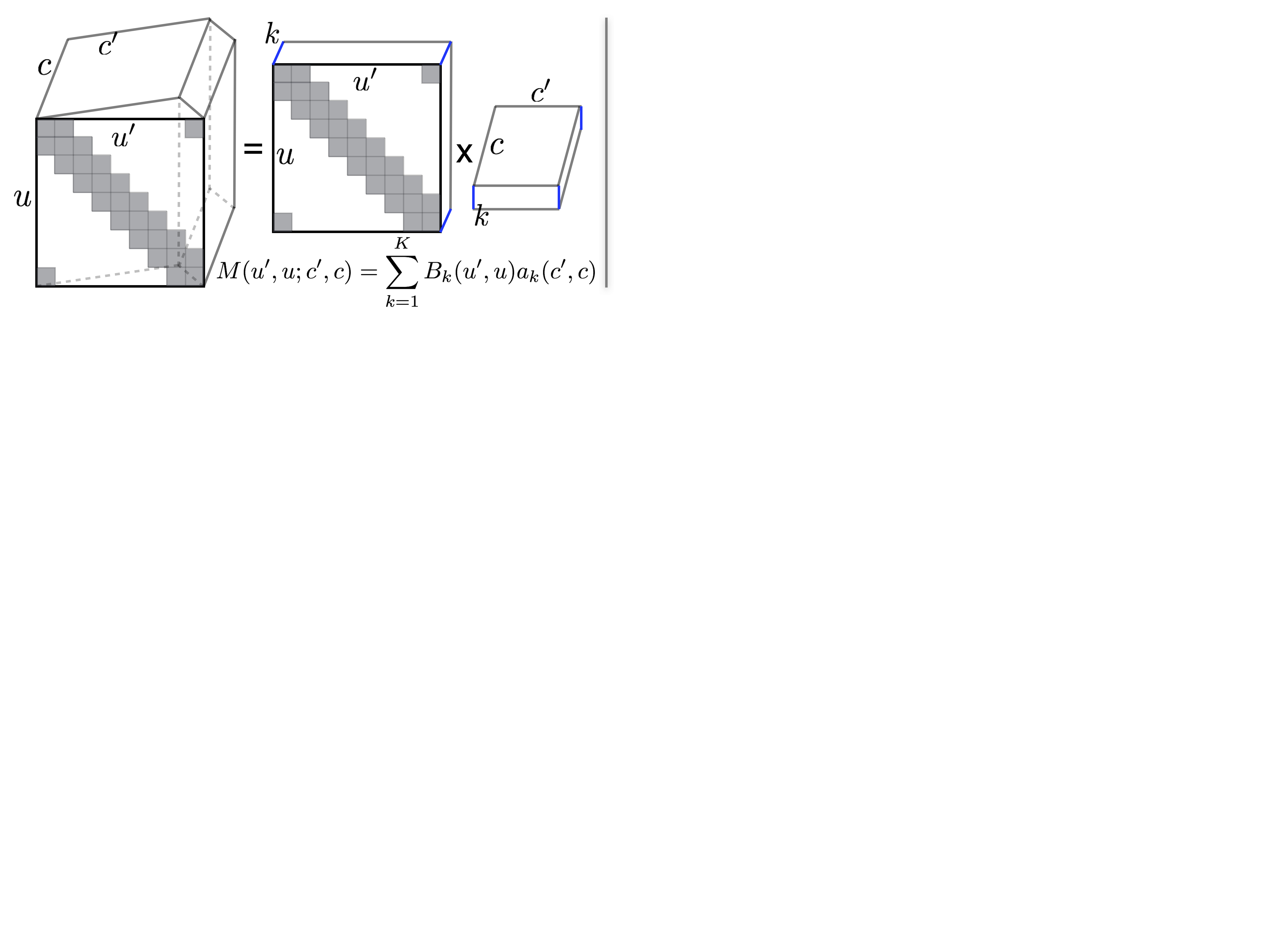} 
\includegraphics[trim={30pt 30pt 30pt 30pt},clip,height =0.16\textwidth]{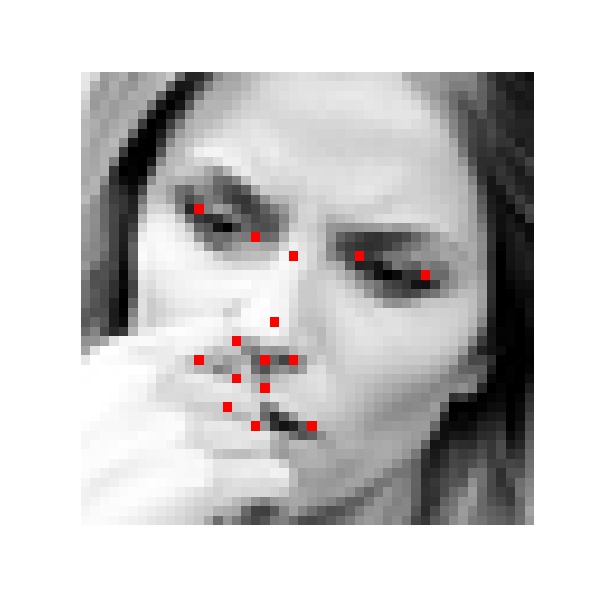}
\includegraphics[trim={50pt 30pt 50pt 30pt},clip,height =0.16\textwidth]{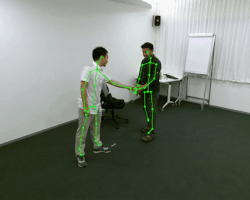}
\includegraphics[width=0.2\textwidth]{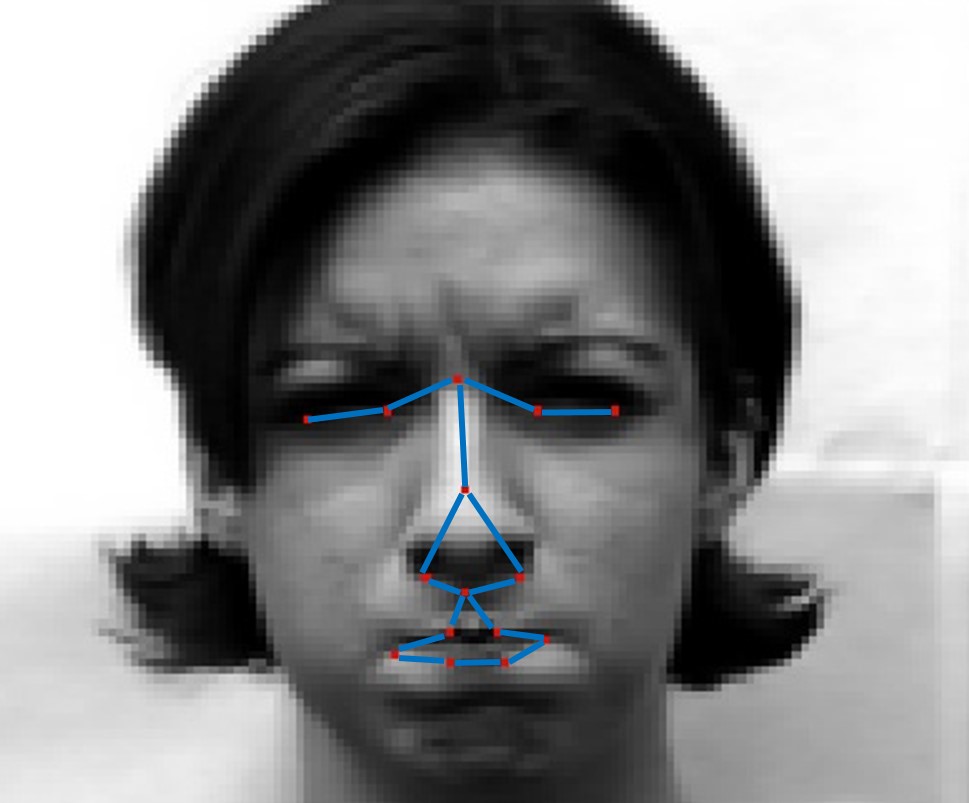}
}
\vspace{-3pt}
\caption{
% \scriptsize
(Left) $K$-rank graph local filters. $M$ is the tensor in the GNN linear mapping \eqref{eq:gnn} \eqref{eq:M-ours}, 
decomposed into learnable local basis $B_k$ combined by learnable coefficients $a_k$,
illustrated for the ring-graph on the right.
(Right) The first two figures shows the good property of landmarks for being invariant to pose and camera viewpoint changes. The third figure illustrates the graph we built on facial landmarks.
}
\label{fig:diag1}
\end{figure}

Because our model allows neighborhood specialized local graph filters, regularization may be needed to prevent over-fitting, so as to handle changing underlying graph topology and other graph noise, e.g., inaccurately detected landmarks or missing landmark points due to occlusions. Therefore, we also introduce a regularization scheme based on local graph Laplacians, motivated by the eigen property of the latter. This further
improves the representation stability aforementioned.
The improved performance of L3Net compared to other GNN benchmarks is demonstrated in a series of experiments,
and with the the proposed graph regularization, our model shows robustness to a variety of graph data noise.

In summary, the contributions  of the work are the following:
\begin{itemize}

    \item We propose a new graph convolution model 
        by a low-rank decomposition of graph filters over trainable local basis,
         which unifies several previous models of both spectral and spatial graph convolutions.

    \item Regularization by local graph Laplacians is introduced to improve the robustness against graph noise.
    
    \item We provide theoretical proof of the enlarged expressiveness for representing graph signals
    and the Lipschitz-type input-perturbation stability  of the new graph convolution model.
    
    \item We demonstrate with applications to object recognition of spherical data and facial expression/skeleton-based action recognitions using landmarks. Model robustness against graph data noise is validated on both real-world and simulated datasets.
\end{itemize}

\subsection{Related Works}\label{sec:litreview}

{\bf  Modeling on face/body landmark data.}
Many applications in computer vision, 
such as facial expression recognition (FER) and skeleton-based action recognition,
need to extract high-level features from landmarked data
which are sampled at irregular grid points on human face or at body joints. 
While CNN methods \cite{fer_cnn_guo2016deep, fer_cnn_ding2017, meng2017identity} 
prevail in FER task, landmark methods have the potential advantage in lighter model size
as well as more robustness to previously mentioned geometric transformations like pose variation.  
Earlier methods based on facial landmarks used hand-crafted features
\cite{fer_lmdk_jeong2018driver,fer_lmdk_morales2019use}
rather than deep networks. 
Skeleton-based methods in action recognition have been developed intensively recently \cite{survey_sbar},
including non-deep methods \cite{skar_hand_chellappa, skar_hand_wang2012mining} 
and deep methods \cite{skar_deep_ke2017new, skar_deep_kim2017interpretable, skar_deep_liu2016spatio, stgcn}. 
Facial and skeleton landmarks only give a coarse and irregular grid,
and then mesh-based geometrical CNN's are hardly applicable,
while previous GNN models on such tasks may lack sufficient expressive power.

{\bf Graph convolutional network.}
A systematic review can be found in several places, e.g. \cite{survey_gnn}.
Spectral graph convolution was proposed using full eigen decomposition of the graph Laplacian in \cite{bruna2013spectral},
Chebyshev polynomial in Chenbet \cite{chebnet}, 
by Cayley polynomials in \cite{cayleynet}.
 GCN \cite{gcn}, the mostly-used GNN,  is a variant of ChebNet using degree-1 polynomial.
 \cite{liao2019lanczosnet} accelerated the spectral computation by Lanczos algorithm.
Spatial graph convolution has been performed 
by summing up neighbor nodes' transformed features in NN4G \cite{nn4g}, 
by graph diffusion process in DCNN \cite{dcnn}, 
where the graph propagation across nodes is by the adjacency matrix.
Graph convolution with trainable filter has also been proposed in several settings:
MPNN \cite{mpnns} enhanced model expressiveness by message passing and sub-network; 
GraphSage \cite{graphsage} used trainable differential local aggregator functions in the form of LSTM or mean/max-pooling;
GAT \cite{gat} and variants \cite{agcn, gaan, genie} introduced  attention mechanism
to achieve adaptive graph affinity, which remains non-negative valued;
EdgeNet \cite{isufi2020edgenets} developed adaptive filters by taking products of trainable local filters.
Our model learns local filters which can take negative values and contains GAT and EdgeNet as special cases.
Theoretically, expressive power of GNN has been studied in \cite{morris2019weisfeiler,xu2019howpowerful,maron2019provably,maron2019invariant,keriven2019universal},
mainly focusing on distinguishing graph topologies, 
while our primary concern is to distinguish signals lying on a graph.

{\bf  CNN and geometrical CNN.} 
Standard CNN 
applies local filters translated and shared across locations on an Euclidean domain.
To extend CNN to  non-Euclidean domains,
convolution on a regular spherical mesh using geometrical information has been 
studied in S2CNN \cite{s2cnn}, SphereNet \cite{spherenet}, SphericalCNN \cite{sphericalcnn}, and UGSCNN \cite{ugscnn},
and applied to 3D object recognition,
for which other deep methods include 
3D convolutional \cite{modelnet_3dconv} and non-convolutional architectures \cite{modelnet_pointnet, modelnet_pointnet++}.
CNN's on manifolds construct weight-sharing across local atlas making use of a mesh,
e.g., by patch operator in \cite{masci2015geodesic},
anisotropic convolution in ACNN \cite{acnn},
mixture model parametrization in MoNet \cite{monet}, 
spline functions in SplineCNN \cite{splinecnn}, 
and 
manifold parallel transport in \cite{schonsheck2018parallel}. 
These geometric CNN models  use information of 
non-Euclidean meshes which usually need  sufficiently fine resolution.

\section{Method}

\subsection{Decomposed local filters}

Consider an undirected graph $G = (V,E)$, $|V|=n$.
A graph convolution layer maps from input node features $X(u',c')$ to output $Y(u,c)$, where $u, u' \in V$,
$c' \in [C']$ ($c \in [C]$) is the input (output) channel index,
the notation $[m]$ means $\{1, \cdots, m \}$, and
\begin{equation}\label{eq:gnn}
Y(u,c) = \sigma (  \sum_{u' \in V, c' \in [C]}  M(u',u; c',c) X(u',c') + \text{bias}(c) ),
\quad u\in V, \, c\in [C].
\end{equation}
The spatial and spectral graph convolutions correspond to different ways of specifying $M$, c.f. Sec. \ref{subsec:unified}.
The proposed graph convolution is defined as 
\begin{equation}\label{eq:M-ours}
M(u',u; c',c) = \sum_{k=1}^K a_k(c',c) B_k(u',u),
\quad a_k(c',c)  \in \R, 
\end{equation}
where $B_k(u',u)$ is non-zero only when $u' \in N_u^{(d_k)}$, 
$N_u^{(d)}$ denoting the $d$-th order neighborhood of $u$ (i.e., the set of $d$-neighbors of $u$),
and $K$ is a fixed number. 
In other words, $B_k$'s are $K$ basis of local filters around each $u$,
and the order $d_k$ can differ with  $1 \le k \le K$. 
Both $a_k$ and $B_k$ are trainable, so the number of parameters are 
$K \cdot C C' + \sum_{k=1}^K \sum_{u \in V} |N_u^{(d_k)}| \sim K \cdot CC' + K n p$,
where $p$ stands for the average local patch size. 
In our experiments we use $K$ up to $5$, and $d_k$ up to $3$. 
The construction \eqref{eq:M-ours} 
can be used as a layer type in larger GNN architectures.
Pooling of graphs can be added between layers,
and the choice of $K$ and neighborhood orders $(d_1,\cdots, d_K)$ can be adjusted accordingly.
The model may be extended in several ways
to be discussed in the last section.

\begin{figure}[t]
\hspace{-10pt}
    \begin{minipage}[h]{0.68\linewidth}
    \raisebox{5pt}{
	\includegraphics[trim={90pt 460pt 110pt 80pt}, clip, width=0.95\textwidth]{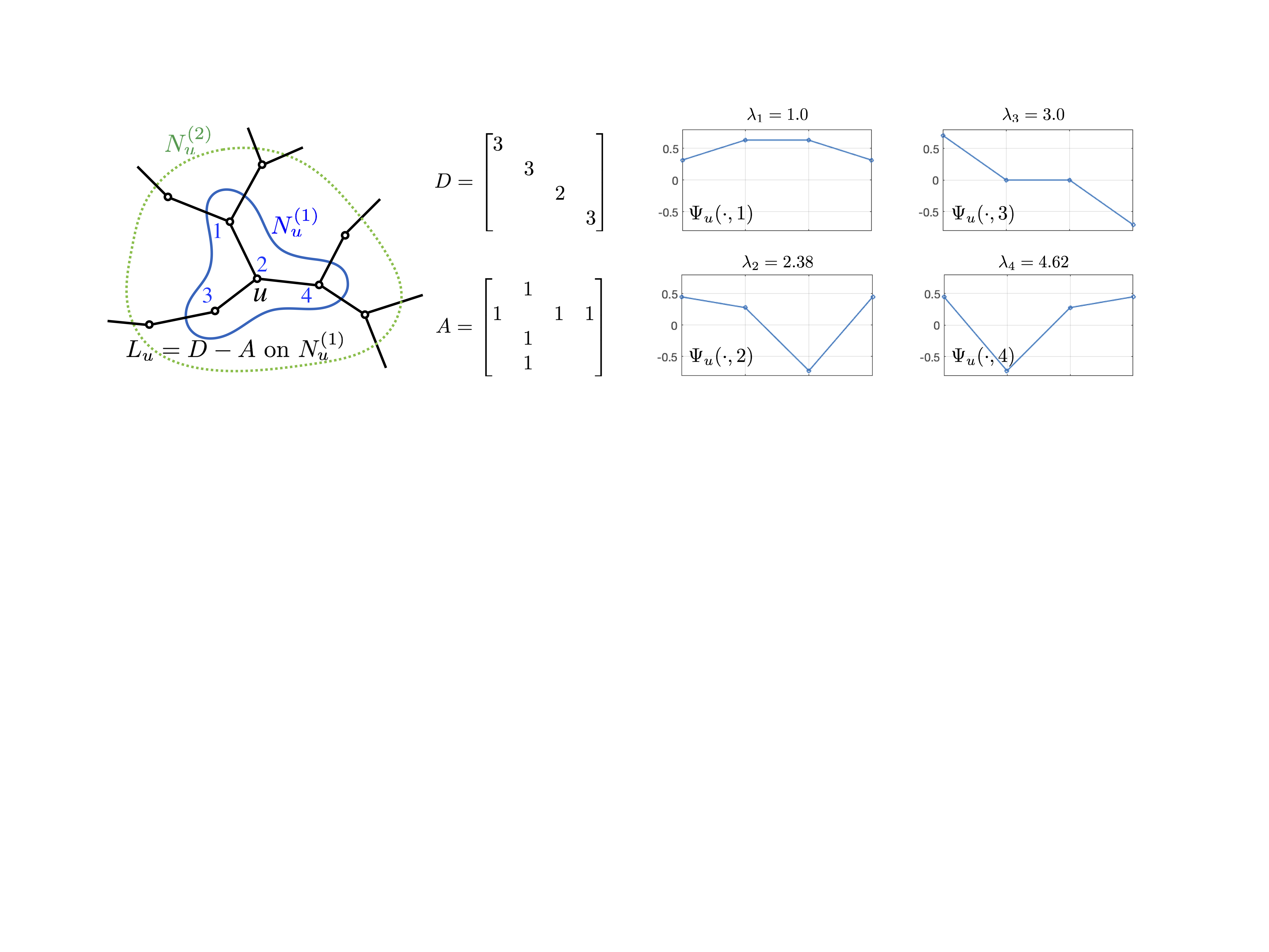}
	}
    \end{minipage}
\hspace{-10pt}   
    \begin{minipage}[h]{0.18\linewidth}
            \scriptsize
        \begin{tabular}{c|c}
        \hline
        Model &  \#params \\
        \hline
        ChebNet  / GCN  &  $L CC'$  / $CC'$ \\ 
        GAT                 &  $R (CC'+2C)$ \\
        EdgeNet     & $L(CC' + n p^{(1)})$\\
        Low-rank CNN   &  $K (CC'+p)$\\
        Locally-connected     & $CC' \cdot np$  \\
         \hline
        L3Net    &  $K( CC' + np) $\\
        \hline
        \end{tabular}
    \end{minipage}
%\vspace{-5pt}
\caption{
(Plots)
Local graph Laplacian $L_u := D-A$ on a neighborhood around node $u$.
The first Dirichlet eigenvector does not change sign on $N_u$  and is envelope-like.
(Table)
Model complexity measured by number of parameters,
$C$ and $C'$ being the number of input and output channels,
$p$ ($p^{(1)}$) the average patch size of local neighborhoods (local 1-neighborhoods),
see more in Sec. \ref{subsec:unified}.
}
\label{fig:diag3}
\end{figure}

\subsection{Regularization by local graph Laplacian}\label{sec:reg}

The proposed L3Net layer enlarges the model capacity by allowing $K$ basis filters at each location,
and a natural way to regularize the trainable filters is by the graph geometry, 
where, by construction, only the local graph patch is concerned. 
We introduce the following regularization penalty of the basis filters $B_k$'s as 
\begin{equation}\label{eq:def-reg}
{\cal R} ( \{B_k\}_k) =  
\sum_{k=1}^K 
\sum_{u \in V} (b_u^{(k)})^T L_u^{(k)} b_u^{(k)}, \quad b_u^{(k)}(v) : = B_k(v, u), \, b_u^{(k)}: N_u^{(d_k)} \to \R, 
\end{equation}
where $L_u^{(k)}$, equaling $(D-A)$  restricted to the subgraph on $N_u^{(d_k)}$, 
is the Dirichlet local graph Laplacian on $N_u^{(d_k)}$ \cite{chung1997spectral} (Fig. \ref{fig:diag3}).
The training objective is 
\begin{equation}\label{eq:train-obj}
{\cal L}( \{ a_k, B_k \}_k)  + \lambda {\cal R} ( \{B_k\}_k), \quad \lambda \ge 0,
\end{equation}
where ${\cal L}$ is the classification loss.
As ${\cal L}$ encourages the diversity of $B_k$'s,
the $K$-rankness usually remains a tight constraint in training,
unless $\lambda$ is very large,
see also  Proposition \ref{prop:large-reg}.

%\vspace{-5pt}
\subsection{A unified framework for graph convolutions}\label{subsec:unified}
Graph convolutions basically fall into two categories, the spatial and spectral constructions \cite{survey_gnn}. 
The proposed L3Net belongs to spatial construction,
and here we show that the model \eqref{eq:M-ours} is a unified framework for various graph convolutoins, both spatial and spectral.
Details and proofs are given in Appendix \ref{sec:proofs}.

$\bullet$
ChebNet \cite{chebnet}, GAT \cite{gat}, EdgeNet \cite{isufi2020edgenets}:
In ChebNet, $M$ per $(c',c)$ equals a degree-($L$-1) polynomial of the graph Laplacian matrix,
where the polynomial coefficients are trainable. 
GCN \cite{gcn} can be viewed as ChebNet with polynomial degree-$1$ and tied coefficients.
% GAT
The attention mechanism in GAT enhances the model expressiveness by incorporating adaptive kernel-based non-negative affinities.
% Edge varying 
In EdgeNet,
the graph convolution operator is the product of trainable local filters supported on order-1 neighborhoods.
We have the following proposition:

\begin{proposition}\label{prop:larger-than-cheb}
L3Net \eqref{eq:M-ours} includes the following models as special cases:
\begin{itemize}
    \item[(1)] ChebNet (GCN) 
    when $K \ge L$ ($K \ge 2$), $L$ being the polynomial degree.
    \item[(2)] GAT when $K \ge R$, $R$ being the number of attention branches. 
    \item[(3)] EdgeNet when $K \ge L$, $L$ being the order of graph convolutions.
    \vspace{3pt}
\end{itemize}
\end{proposition}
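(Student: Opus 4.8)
The common thread across all three parts is that each competing model, per input-output channel pair $(c',c)$, applies a linear operator on node features that factors as a sum of at most $K$ terms, each separating into a channel-only factor and a node-only (graph) factor. Since \eqref{eq:M-ours} is exactly such a separated, $K$-term form, with $a_k(c',c)$ carrying the channel dependence and $B_k(u',u)$ carrying the node dependence, my plan in each case is to (i) write the competing model's operator in this separated form, (ii) read off the $B_k$'s and check that each is supported on $N_u^{(d_k)}$ for an appropriate order $d_k$, (iii) read off the $a_k$'s, and (iv) count the number of terms to obtain the stated bound on $K$. Establishing the locality in step (ii) is the only nontrivial geometric ingredient, and is where I expect most of the bookkeeping.

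For part (1), ChebNet's per-channel operator is $M_{(c',c)} = \sum_{\ell=0}^{L-1}\theta_\ell(c',c)\, T_\ell(\tilde L)$, a degree-$(L-1)$ polynomial in the rescaled graph Laplacian with channel-dependent coefficients $\theta_\ell(c',c)$. The key observation is that the Chebyshev matrices $T_\ell(\tilde L)$ do not depend on $(c',c)$, so I would set $B_{\ell+1}:=T_\ell(\tilde L)$ and $a_{\ell+1}(c',c):=\theta_\ell(c',c)$ for $\ell=0,\dots,L-1$, giving $K=L$. Locality follows from the standard fact that a degree-$\ell$ polynomial in $\tilde L$ (equivalently in the adjacency) has $(u',u)$ entry vanishing whenever the graph distance exceeds $\ell$; hence $B_{\ell+1}$ is supported on $N_u^{(\ell)}$ and $d_{\ell+1}=\ell$. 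For GCN I would specialize to the degree-$1$ case: its renormalized propagation uses the normalized adjacency with self-loops, a degree-$1$ polynomial in the adjacency, which needs the two monomials (the identity on $N_u^{(0)}$ and the adjacency term on $N_u^{(1)}$), realized by $K=2$ with the two coefficients tied.

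For part (3), EdgeNet builds its operator by composing trainable edge-varying filters each supported on order-$1$ neighborhoods; the order-$L$ filter is a sum of $L$ product terms with channel coefficients in front, the $\ell$-fold product of local filters being the relevant node factor. I would take $B_\ell$ to be this $\ell$-fold product (channel-independent) and $a_\ell$ its coefficient, using the elementary fact that a product of $\ell$ matrices each supported on $N_u^{(1)}$ is supported on $N_u^{(\ell)}$, so $d_\ell=\ell$; the count of product terms then gives $K\ge L$. The one point to verify with care is that the channel-mixing genuinely factors out of the products so that the node factors $B_\ell$ are shared across channels, matching the separated form of \eqref{eq:M-ours}.

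Part (2) is the case I expect to be the main obstacle, because GAT's affinities are adaptive (input-dependent) and its $R$ heads are concatenated rather than summed. Here I would argue at the level of the realized linear map: for learned affinity matrices $\alpha^{(r)}$ and head weights $W^{(r)}$, the multi-head GAT layer computes $M(u',u;c',c)=\sum_{r=1}^{R}\alpha^{(r)}_{uu'}\,W^{(r)}(c',c)$, where concatenation is absorbed by letting $W^{(r)}(c',c)$ vanish unless output channel $c$ belongs to head $r$. Each head then contributes one separated term $B_r:=\alpha^{(r)}$ (supported on $N_u^{(1)}$, hence $d_r=1$) and $a_r:=W^{(r)}$, so $K=R$ suffices for the inclusion; moreover the $\alpha^{(r)}$ produced by softmax attention are nonnegative while the $B_k$ in \eqref{eq:M-ours} are unconstrained, which is the sense in which L3Net properly generalizes the construction. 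The delicate thing to state precisely is that this captures GAT's realized operator for any fixed input/affinities, rather than reproducing the attention computation itself.
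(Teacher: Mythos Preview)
Your proposal is correct and follows essentially the same route as the paper: in each part you identify the node-only factor as $B_k$, the channel-only factor as $a_k$, verify the local support, and count terms. The only cosmetic difference is in part~(1), where the paper first re-expands the Chebyshev polynomials into the monomial basis $A_{sym}^{\,l}$ (with transformed coefficients $\beta_l$) before setting $B_k=A_{sym}^{\,k-1}$, whereas you take $B_{\ell+1}=T_\ell(\tilde L)$ directly; both choices are degree-$\ell$ polynomials in the adjacency and hence supported on $N_u^{(\ell)}$, so either works.
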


$\bullet$
CNN: When nodes lie on a geometrical domain that allows translation ($u'-u$),
 in \eqref{eq:M-ours} setting $B_k(u',u) = b_k(u'-u)$ for some $b_k(\cdot)$ enforces spatial convolutional.
The convolutional kernel can be decomposed as $\sum_{k} a_k(c',c)b_k(\cdot)$ \cite{qiu2018dcfnet}.
Extension to CNN on manifold mesh is also possible as in  \cite{masci2015geodesic, splinecnn}.
We have the following:

\begin{proposition}\label{prop:larger-than-dcfcnn}
Mesh-based geometrical CNN's defined by linear patch operators,
including standard CNN on $\R^d$,
and with low-rank decomposed filters are special cases of  L3Net \eqref{eq:M-ours}.
\end{proposition}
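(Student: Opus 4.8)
The plan is to put both the standard CNN on $\R^d$ and a general mesh-based geometrical CNN into the GNN form \eqref{eq:gnn} and then read off an explicit identification of the resulting linear map with the L3Net tensor \eqref{eq:M-ours}. First I would recall that a geometrical CNN defined by \emph{linear patch operators} (as in \cite{masci2015geodesic,acnn,monet,splinecnn}) computes, at each node $u$, a family of patch responses $(D_j X(\cdot,c'))(u)$ for $j=1,\dots,J$, where each $D_j$ acts \emph{linearly} on the input channel $X(\cdot,c')$. Linearity together with the locality of the patch means each $D_j$ admits a kernel representation
\begin{equation*}
(D_j X(\cdot,c'))(u) = \sum_{u'\in V} P_j(u',u)\, X(u',c'),
\end{equation*}
with $P_j(\cdot,u)$ supported on a local neighborhood of $u$. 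The layer then outputs $Y(u,c)=\sigma\big(\sum_{c'}\sum_j W(c',c,j)(D_jX(\cdot,c'))(u)+\text{bias}(c)\big)$ for learnable filter weights $W$. For standard CNN on $\R^d$ the patch operators are coordinate shifts, $(D_\delta X(\cdot,c'))(u)=X(u+\delta,c')$ indexed by offsets $\delta$ in the filter footprint, i.e. $P_\delta(u',u)=\indicator{u'=u+\delta}$.

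Next I would insert the low-rank filter decomposition (following \cite{qiu2018dcfnet}): $W(c',c,j)=\sum_{k=1}^K a_k(c',c)\, w_k(j)$, so that the channel-mixing tensor factors through a rank-$K$ set of spatial profiles $w_k$ shared across channel pairs. Substituting both representations and exchanging the order of summation gives
\begin{equation*}
Y(u,c)=\sigma\Big(\sum_{u'\in V,\,c'}\Big[\sum_{k=1}^K a_k(c',c)\Big(\sum_j w_k(j) P_j(u',u)\Big)\Big]X(u',c')+\text{bias}(c)\Big).
\end{equation*}
Setting $B_k(u',u):=\sum_j w_k(j)P_j(u',u)$ and keeping the $a_k(c',c)$ as coefficients, the bracketed linear map is exactly $M(u',u;c',c)=\sum_{k=1}^K a_k(c',c)B_k(u',u)$, which is the L3Net form \eqref{eq:M-ours}. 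For standard CNN this specializes to $B_k(u',u)=w_k(u'-u)$, a translation-shared local basis.

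It remains only to verify the support condition of \eqref{eq:M-ours}, namely that each $B_k(\cdot,u)$ is nonzero only on some $N_u^{(d_k)}$. This follows because every patch kernel $P_j(\cdot,u)$ has finite local support around $u$, so the finite linear combination $B_k(\cdot,u)=\sum_j w_k(j)P_j(\cdot,u)$ inherits a finite local support; choosing $d_k$ large enough that $N_u^{(d_k)}$ contains every patch footprint makes the L3Net locality requirement hold (with a common order $d_k=d$ when all patches share one neighborhood). I would also note that taking $w_k$ to be the indicators of the individual offsets recovers the \emph{full}-rank CNN as the case $K=p$, so the low-rank decomposition is not essential to the embedding, only to the parameter savings.

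I expect no computational difficulty in the identification itself; the main obstacle is definitional. The delicate step is pinning down a notion of ``linear patch operator'' that is simultaneously general enough to cover the cited mesh-based models (geodesic CNN, ACNN, MoNet, SplineCNN) and concrete enough to supply the kernel representation $P_j(u',u)$ with guaranteed local support. Once that representation is granted, the embedding into \eqref{eq:M-ours} is immediate, and the proposition reduces to the observation above that the patch-response-then-filter composition is a $K$-term sum of coefficient-weighted local basis maps.
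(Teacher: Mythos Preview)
Your proposal is correct and follows essentially the same route as the paper: write the geometrical CNN layer via a linear patch operator, impose the low-rank filter decomposition, and read off $B_k$ and $a_k$. The only cosmetic difference is that the paper keeps the patch operator as a single linear map $\mathcal{P}_u$ acting on template filters $b_k$ and sets $B_k(u',u)=(\mathcal{P}_u b_k)(u')$, whereas you expand $\mathcal{P}_u$ into its kernel columns $P_j(\cdot,u)$ and obtain $B_k(u',u)=\sum_j w_k(j)P_j(u',u)$; these are the same identification once one writes $\mathcal{P}_u w=\sum_j w(j)P_j(\cdot,u)$.
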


We also note that
L3Net reduces from locally connected GNN \cite{coates2011selecting,bruna2013spectral},
the largest class of spatial GNN,
only by the low-rankness imposed by a small number of $K$ in \eqref{eq:M-ours}.
Locally connected GNN can be viewed as  \eqref{eq:gnn} with the requirement that for each $(c,c')$, 
$M(u', u; c', c)$ is nonzero only when $u'$ is locally connected in $u$.
The complexities of the various models are summarized  in Fig. \ref{fig:diag3} (Table),
where L3Net reduces from  the $np \cdot CC'$ complexity of locally-connected net to be the additive $(np + CC')$ times $K$.
When the number of channels $C$, $C'$ are large, e.g. in deep layers they $\sim 10^2$,
and the graph size is not large, 
e.g., in landmark data applications $np \ll CC'$,
the complexity is dominated by $K C C'$ which is comparable with ChebNet (GAT) if $K \approx L$ ($R$).
The computational cost is also comparable, 
as shown in experiments in Sec. \ref{sec:exp}. 
Furthermore, we have:

\begin{proposition}\label{prop:large-reg}
Suppose the subgraphs on $N_u^{(d_k)}$ are all connected, 
given $\alpha_{u,k} > 0 $ for all $u,k$, 
the minimum of \eqref{eq:def-reg} with constraint $\| b_u^{(k)} \|_2 \ge \alpha_{u,k}$ 
 is achieved  when $b_u^{(k)}$ equals the first Dirichlet eigenvector on $N_u^{(d_k)}$,
which does not change sign on $N_u^{(d_k)}$.
\end{proposition}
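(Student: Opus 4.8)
The plan is to exploit the separable structure of the penalty ${\cal R}$, reduce each block to a Rayleigh-quotient problem, and finish with a Perron--Frobenius argument for the sign property. First I would observe that both the objective \eqref{eq:def-reg} and the constraint are completely decoupled across the index pairs $(u,k)$: the penalty is a sum $\sum_{u,k} (b_u^{(k)})^T L_u^{(k)} b_u^{(k)}$ of independent quadratic forms, and the constraint $\| b_u^{(k)} \|_2 \ge \alpha_{u,k}$ involves only the single block $b_u^{(k)}$. Hence minimizing ${\cal R}$ reduces to solving, independently for each $(u,k)$, the scalar problem $\min_{\| x \|_2 \ge \alpha} x^T L x$, with $L := L_u^{(k)}$ symmetric positive semidefinite (indeed $x^T L x = \sum_{\{v,w\}}(x(v)-x(w))^2 \ge 0$ after extending $x$ by zero across the Dirichlet boundary) and $\alpha := \alpha_{u,k} > 0$.

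Next I would reduce the inequality constraint to the sphere. Expanding $x$ in an orthonormal eigenbasis of $L$ gives $x^T L x = \sum_i \lambda_i \langle x, v_i\rangle^2 \ge \lambda_{\min} \| x \|_2^2 \ge \lambda_{\min}\alpha^2$ for every feasible $x$, where $\lambda_{\min}\ge 0$ is the smallest eigenvalue. Conversely, $x = \alpha v_1$ for a unit eigenvector $v_1$ of $\lambda_{\min}$ is feasible and attains $\lambda_{\min}\alpha^2$. Therefore the minimum equals $\lambda_{\min}\alpha^2$, is achieved at $b_u^{(k)} = \alpha_{u,k} v_1$, and (when $\lambda_{\min}>0$) the norm constraint is active. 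This is precisely the Rayleigh-quotient characterization $\lambda_{\min}=\min_{\|x\|_2=1} x^T L x$, so the minimizing $b_u^{(k)}$ is a scaling of the smallest-eigenvalue, i.e. first Dirichlet, eigenvector on $N_u^{(d_k)}$.

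The main work—and the part I expect to be the real obstacle—is the sign statement, that this minimizing eigenvector does not change sign. Here I would use that $L = D-A$ has nonpositive off-diagonal entries ($-A_{vw}\le 0$), so for $c$ large enough the matrix $cI - L$ is entrywise nonnegative, and it is irreducible exactly because the subgraph on $N_u^{(d_k)}$ is connected (the standing hypothesis). By Perron--Frobenius, $cI-L$ has a simple largest eigenvalue with a strictly positive eigenvector; translating back, $\lambda_{\min}$ is a simple eigenvalue of $L$ whose eigenvector $v_1$ is sign-definite, which gives the claim. An equivalent variational route, which I would mention as a cross-check, is to note that $x^T L x = \sum_{\{v,w\}}(x(v)-x(w))^2$ plus boundary terms depending only on $|x|$; since $\bigl|\,|a|-|b|\,\bigr|\le |a-b|$, replacing $v_1$ by its entrywise modulus $|v_1|$ cannot increase the quadratic form, so $|v_1|$ is also a minimizer and hence an eigenvector, after which connectivity forces $|v_1|>0$ strictly and thus $v_1=\pm|v_1|$ has constant sign. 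The only subtlety to flag is the degenerate case $\lambda_{\min}=0$ (pure subgraph Laplacian with no boundary edges), where $v_1$ is the constant vector and the conclusion holds trivially; the Perron--Frobenius argument covers this case uniformly.
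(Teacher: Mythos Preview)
Your proposal is correct and follows essentially the same approach as the paper: decouple ${\cal R}$ into independent blocks indexed by $(u,k)$, solve each block via the Rayleigh-quotient characterization of the smallest eigenvalue, and invoke Perron--Frobenius (using connectedness of the subgraph) for the sign-definiteness of the bottom eigenvector. Your write-up is in fact more explicit than the paper's---you spell out the $cI-L$ shift, give the $|v_1|$ variational alternative, and flag the $\lambda_{\min}=0$ edge case---but the underlying argument is identical.
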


The proposition shows that in the strong regularization limit of $\lambda \to \infty$ in \eqref{eq:train-obj},
L3Net reduces to be ChebNet-like.
The constraint with constants $\alpha_{u,k}$ 
is included because
otherwise the minimizer will be $B_k$ all zero.
The first Dirichlet eigenvector is envelope-like (Fig. \ref{fig:diag3}), 
and then $B_k(\cdot, u)$ will be averaging operators on the local patch.
Thus the regularization parameter $\lambda$ can be viewed as trading-off between the more expressiveness in the learnable $B_k$,
and the more stability of the averaging local filters, similar to ChebNet and GCN.

\section{ Analysis }

We analyze the representation expressiveness 
and stability (defined in below)
of the proposed L3Net model.
All proofs in Appendix \ref{sec:proofs},
and experimental details in Appendix \ref{app:updown-detail}.

\subsection{Representation expressiveness of graph signals}
\label{subsec:expressive}

The theoretical question of graph signal representation expressiveness
concerns the ability for GNN deep features to distinguish graph signals. % on a fixed graph.
While related, the problem differs from the graph isomorphism test problem which has been intensively studied in the GNN expressiveness literature.
Here we prove that L3Net is strictly more expressive than certain spectral GNNs,
and support the theoretical prediction by experiments.

\begin{figure}
%\vspace{-4pt}
\centering
\scriptsize
    \begin{minipage}[h]{0.23\linewidth}
        \centering
        \includegraphics[height=70pt]{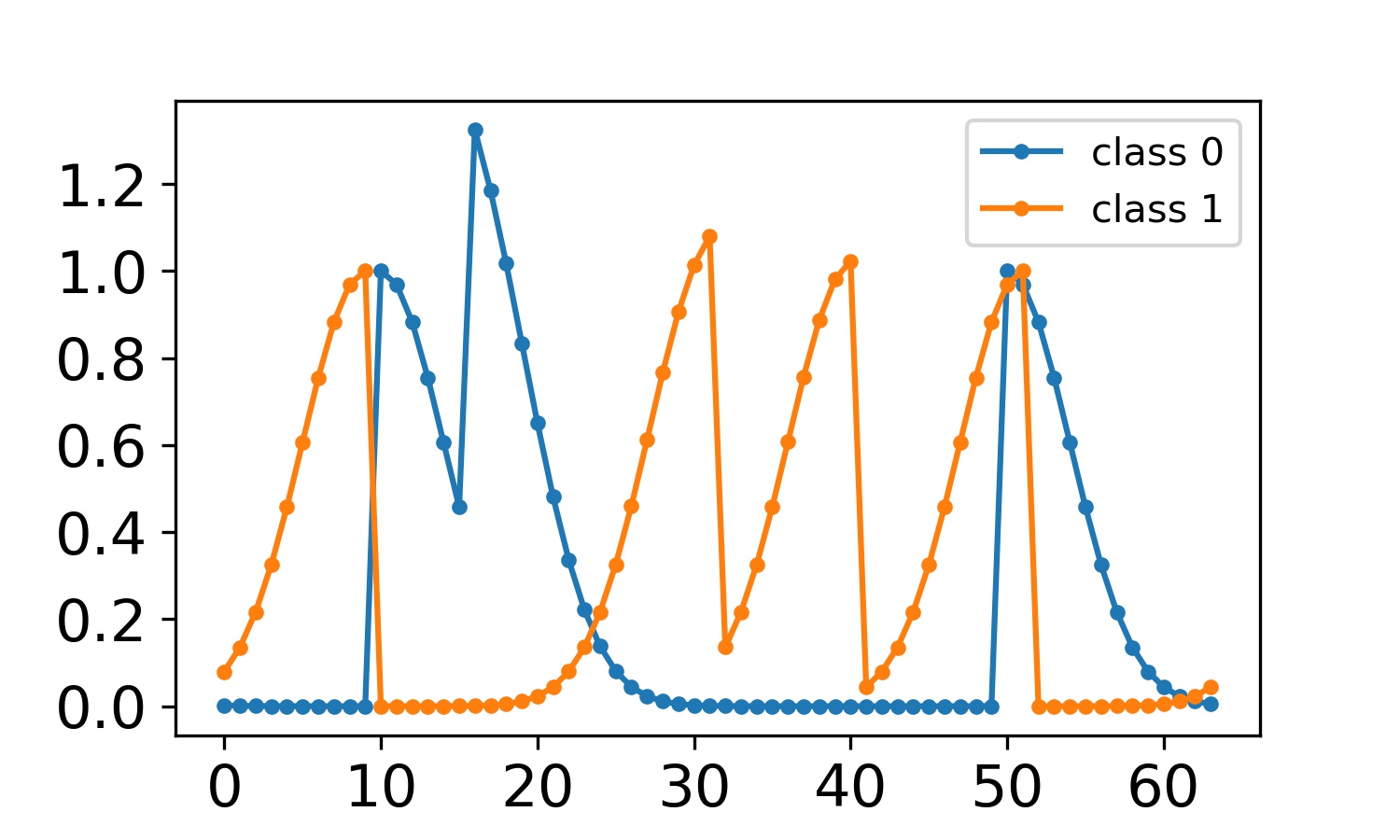}
    \end{minipage}
    \hspace{0.02\linewidth}
    \begin{minipage}[h]{0.5\linewidth}
        \centering
        \footnotesize
        \begin{adjustbox}{max width=\textwidth}
        \begin{tabular}{c|c|c|c||c}
        \hline
        Model & order & \#params & ring graph Acc & chain graph Acc\\
        \hline
        \multirow{4}*{ChebNet}  & L=3 & 6.5k & $51.71\pm0.24$ & $51.05\pm0.33$ \\
        ~                       & L=5 & 10.7k & $51.62\pm0.24$ & $51.07\pm0.37$ \\
        % ~                       & L=9 & 19.0k & $51.75\pm0.40$ & $50.76\pm0.40$ \\
        ~                       & L=30 & 62.7k & $51.32\pm0.38$ & $51.01\pm0.41$ \\
        \hline
        GAT (R=1)    & 1 & 1.3k  & $51.62\pm0.14$ & $51.46\pm0.94$ \\
        GAT (R=8)    & 1 & 10.4k  & $57.82\pm8.06$ & $58.04\pm9.13$ \\
        \hline
        WLN    & 1 & 4.5k  & $50.99\pm0.36$ & $50.8\pm0.08$  \\
        \hline
        MPNN         & 1 & 9.4k  & $51.06\pm0.32$ & $50.94\pm0.09$ \\
        \hline
        \multirow{3}{*}{L3Net}  & 1 & 2.7k  & $99.82\pm0.05$ & $99.69\pm0.09$ \\
        ~                           & 0;1;2 & 7.4k & $99.93\pm0.03$ & $99.85\pm0.04$ \\
        ~                       & $1^*$ & 2.3k & $\mathbf{99.96\pm0.01}$ & $\mathbf{99.94\pm0.01}$\\ 
        \hline
        \end{tabular}
        \end{adjustbox}
    \end{minipage}
    \vspace{0.08\textwidth}
    \begin{minipage}[h]{0.23\textwidth}
        \centering
        \includegraphics[width=90pt,height=65pt]{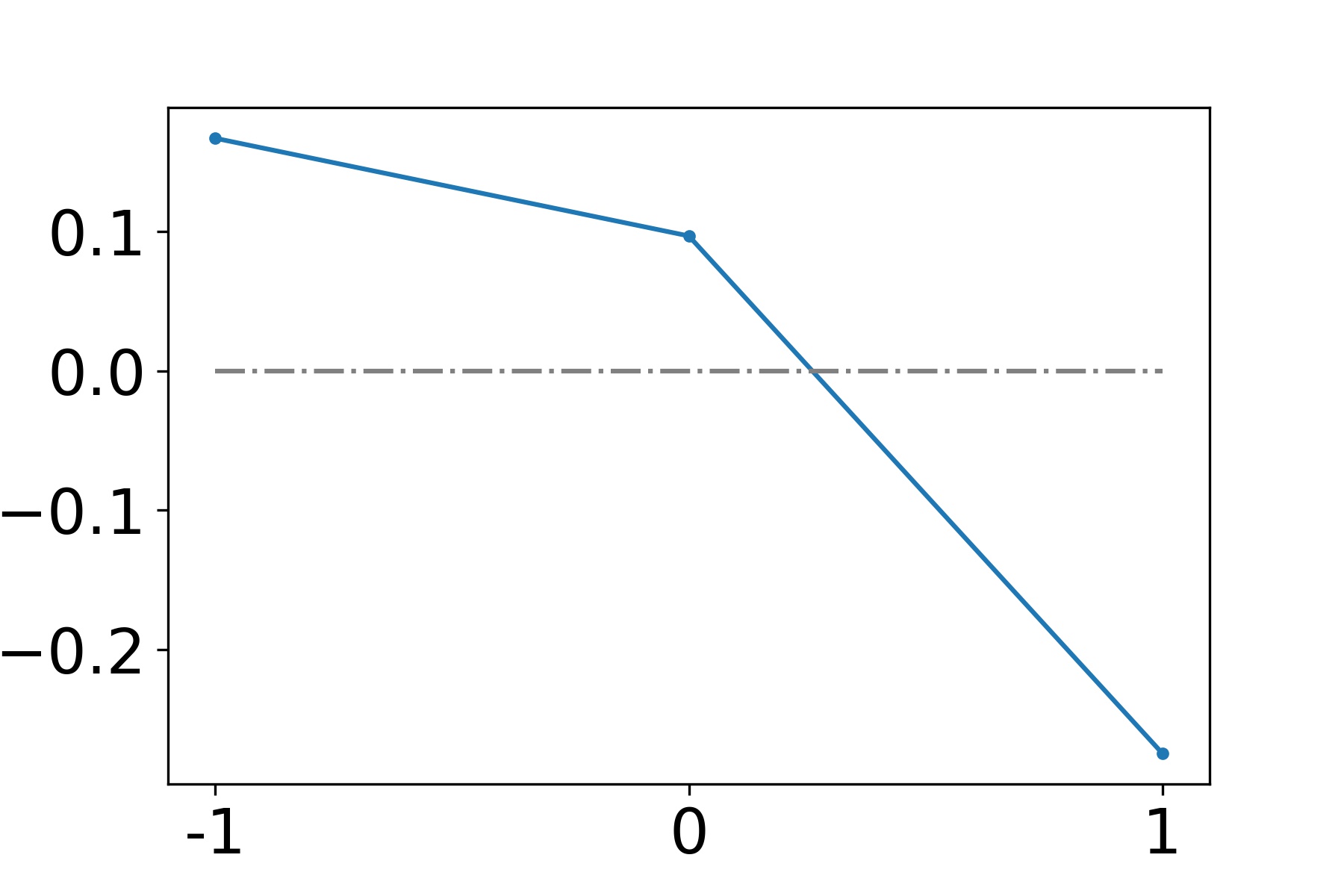}
    \end{minipage}
    \vspace{-35pt}
    \caption{\small
    % \scriptsize
     Up/down-wind classification.
     (Plots) Left: example data from two classes. Right: learned shared basis on the graph neighborhood of 3, corresponding to the last row in the table. 
      (Table) Test accuracy by MPNN \cite{mpnns}, WLN \cite{morris2019weisfeiler}, ChebNet up to $L$=30 and L3Net $K$=1 and 3, as well as GAT with different heads.
      Last row order 1 with star: L3Net with shared  basis $B(\cdot, u)$ across all locations $u$. 
     }
   \label{fig:updown}
   %\vspace{-5pt}
\end{figure}

We have shown that the L3Net model contains ChebNet (Proposition \ref{prop:larger-than-cheb}),
and the following proposition proves the strictly more expressiveness for graph signal classification.
We call $B$ a graph local filter if $B(u,v)$ is non-zero only when $v$ is in the neighborhood of $u$.
In a spectral GNN, the graph convolution takes the form as $x \mapsto f(A) x$ where $f$ is a function on $\R$, 
and $A$ is the (possibly normalized) adjacency matrix.

\begin{proposition}\label{prop:expressive}
There is a graph and
1) A local filter $B$ on it such that
$B$ cannot be expressed by any spectral graph convolution, but can be expressed by L3Net with $K=1$.
2)  Two data distributions on the graph (two classes)
such that,
with a group invariant operator in the last layer,
the deep feature of any spectral GNN cannot distinguish the two classes, 
but that of L3Net with 1 layer and $K=1$ can.
%%\vspace{-5pt}
\end{proposition}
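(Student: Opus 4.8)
The plan is to exhibit a single graph --- the cycle (ring) graph $C_n$ --- on which the entire gap between spectral convolutions and local filters is controlled by one symmetry, the reflection automorphism, and to use \emph{chirality} with respect to this symmetry as the separating device for both parts. Write $S$ for the cyclic shift on $C_n$, so that the adjacency matrix is $A = S + S^{-1}$, and let $R$ be the reflection $u \mapsto -u \bmod n$, which is a graph automorphism and hence satisfies $RAR^{-1}=A$. The key observation is that any spectral convolution is \emph{equivariant} under $R$: since $R$ commutes with $A$ it commutes with every spectral projection of $A$, so $R f(A) R^{-1} = f(RAR^{-1}) = f(A)$. A directional local filter, by contrast, is chiral, i.e. not $R$-equivariant, and this is precisely what L3Net realizes while no $f(A)$ can.

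For part (1) I would take the forward-difference local filter $B = I - S$, supported on $\{u,u-1\}$ and therefore a legitimate graph local filter realizable by L3Net with $K=1$ (set $B_1=B$, $a_1=1$, one channel). Conjugating by the reflection gives $R B R^{-1} = I - S^{-1} \neq B$, so $B$ fails $R$-equivariance, whereas every $f(A)$ has it; hence $B \neq f(A)$ for all $f$, placing $B$ outside the reach of any spectral graph convolution. Equivalently, on the regular graph $C_n$ any admissible normalization keeps $A$ symmetric, so $f(A)$ is symmetric while $B$ is not.

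For part (2) I would let $x_0$ be a chiral signal (with $R x_0$ not a rotation of $x_0$), take Class~A to be the rotation orbit $\{S^j x_0\}_j$ and Class~B its reflection $\{S^j R x_0\}_j$, and use sum-pooling over nodes as the group-invariant last-layer operator. First I would prove an equivariance lemma by induction on depth: each spectral layer $x \mapsto \sigma(\sum_{c'} f_{c'c}(A)x_{c'}+b_c)$ is $R$-equivariant, since $f(A)$ commutes with the permutation $R$ while the pointwise $\sigma$ and the per-channel bias commute with any permutation; composition preserves equivariance, and sum-pooling is permutation-invariant, so the whole spectral network returns identical features on $x_0$ and $R x_0$ (and, since $S$ is also an automorphism, a constant feature on each rotation orbit). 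Thus no readout classifier can separate Class~A from Class~B. For L3Net I would take the single layer $x \mapsto \sigma(Bx+b)$ with $B=I-S$; using permutation-invariance of the pool, the Class~B feature equals $\sum_u \sigma((R^{-1}BR x_0)(u)+b) = \sum_u \sigma(((I-S^{-1})x_0)(u)+b)$, so the two class features compare the pooled $\sigma$ of the forward versus the backward difference of $x_0$, and I would verify by a direct computation that for a suitable $x_0$ and bias these differ, giving perfect separation by a threshold.

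The hard part will be the final separation step, because of a cancellation that must be avoided: with $\sigma$ the ReLU and zero bias one has $\sum_u[\sigma(d_u)-\sigma(-d_u)] = \sum_u d_u = 0$ for $d = (I-S)x_0$ (the difference telescopes around the cycle), so the naive forward/backward-difference features coincide. I would resolve this by using a nonzero bias $b$ (equivalently, a filter whose support is not anti-symmetric) and choosing $x_0$ so that some entries of $(I-S)x_0$ exceed $b$ in magnitude, which breaks the antisymmetry of $\sigma$ and makes the pooled features provably unequal; a short explicit example on a small cycle suffices. A secondary point needing care is making the equivariance lemma robust to \emph{arbitrary} depth, width, and admissible normalization of $A$, so the impossibility genuinely covers any spectral GNN rather than a single layer.
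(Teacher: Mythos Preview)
Your proposal is correct and follows essentially the same route as the paper: the ring graph, the forward-difference filter $B$, the reflection automorphism $R$ satisfying $RAR^{-1}=A$ to show $B\neq f(A)$, and mirror-image (``upwind/downwind'') signal classes that a spectral GNN cannot separate because of permutation equivariance (the paper's Lemma~A.1), while the difference filter can. Your Part~2 is in fact more careful than the paper's, which simply asserts that the difference filter distinguishes the two classes; your identification and resolution of the ReLU telescoping cancellation via a nonzero bias fills a detail the paper leaves implicit.
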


The fundamental argument is that spectral GNN is permutation equivariant
(see e.g. \cite{gama2019stability}, reproduced as Lemma \ref{lemma:node-permutation}),
and the local filters in L3Net break such symmetry to obtain more discriminative power.
The constructive example used in the proof is on a ring graph (Fig. \ref{fig:ring_expressive}, $A$ and the basis $B$),
and the two data distributions shown  in Fig. \ref{fig:updown}.
Proposition \ref{prop:expressive} gives that,
on the ring graph and using GNN with a global pooling in the last layer,
an L3Net layer with $K=1$ can have classification power while a ChebNet with any order cannot. 
On a chain graph (removing the connection between two end points in a ring graph), 
which not exactly follows the theory assumption,
since the two graphs only differ at one edge, we expect that it will remain a difficult case for the ChebNet but not for L3Net.
To verify the theory, we conduct experiments 
using a two-layer GNN and the results are in Fig. \ref{fig:updown} (table). 
In the last row, we further impose shared basis across nodes 
which reduces L3Net to a 1D convolutional layer,
and the learned basis shows a ``difference'' shape (right plot) which explains its classification power.
Results are similar using a 1-layer GNN (Tab. \ref{tab:up-down-gcn-1}).
The argument in Proposition \ref{prop:expressive} extends to other graphs and network types.
Generally, when a GNN based on global graph adjacency or Laplacian matrix
applies linear combinations of local averaging filters,  %in different orders,
then certain graph filters may be difficult to express.
We experimentally examine GAT, WLN and MPNN, which underperform on the binary classification task,
as shown in Fig. \ref{fig:updown} (table).

\subsection{Representation stability}\label{subsec:analysis-stable}

We derive perturbation bounds of GNN feature representation,
which is important for robustness against data noise. 
The analysis implies a trade-off between de-noising and keeping  high-frequency information,
which is consistent with experimental observation in Sec. \ref{sec:exp}.

Consider the change in the GNN layer output $ Y$ defined in \eqref{eq:gnn}\eqref{eq:M-ours}
when the input $X$ changes. 
For simplicity, let $C=C'=1$, and the argument extends.  
For any graph signal $x: V \to \R$ and $ V' \subset V$, define 
$\|x \|_{2, V'} := (\sum_{u \in V'} x(u)^2)^{1/2}$ and $\langle x, y\rangle_{V'} = \sum_{u \in V'} x(u)y(u)$.
The following perturbation bound holds for the L3Net layer with/without regularization.

\begin{theorem}\label{thm:stable-1}
Suppose that $X = \{ X(u) \}_{u \in V}$ is perturbed to be $\tilde{X} = X + \Delta X $, 
 the activation function $\sigma: \R \to \R$ is non-expansive,
 and $\sup_{u \in V} \sum_{k=1}^K |N_u^{(d_k)} | \le K p$,
then the change in the output $\{ Y(u) \}_{u \in V}$ in $2$-norm is bounded by
\[
\| \Delta Y \|_{2,V} \le  \beta^{(1)} \cdot \|a\|_2 \sqrt{Kp} \|\Delta X\|_{2,V},
\quad
\beta^{(1)} := \sup_{k, u} \| B_k( \cdot, u)\|_{2, N_u^{(d_k)}}.
\]
\end{theorem}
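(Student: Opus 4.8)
The plan is to bound $\|\Delta Y\|_{2,V}$ by first passing through the activation and then tracking how the low-rank decomposition and the graph locality propagate the perturbation $\Delta X$. Since the bias is unchanged and $\sigma$ is non-expansive, for each $u$ the two pre-activations differ exactly by $\sum_{u'\in V} M(u',u)\,\Delta X(u')$, so $|\Delta Y(u)| \le |\sum_{u'} M(u',u)\,\Delta X(u')|$ and it suffices to bound $\sum_u |\sum_{u'} M(u',u)\,\Delta X(u')|^2$. I would then substitute $M(u',u)=\sum_{k=1}^K a_k B_k(u',u)$ and use that $B_k(\cdot,u)$ is supported on $N_u^{(d_k)}$ to write each pre-activation change as $\sum_{k=1}^K a_k \langle b_u^{(k)}, \Delta X\rangle_{N_u^{(d_k)}}$, where $b_u^{(k)}(v)=B_k(v,u)$ as in the regularization definition.

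The core of the argument is two applications of Cauchy--Schwarz. First, on each neighborhood, $|\langle b_u^{(k)}, \Delta X\rangle_{N_u^{(d_k)}}| \le \|b_u^{(k)}\|_{2,N_u^{(d_k)}}\,\|\Delta X\|_{2,N_u^{(d_k)}} \le \beta^{(1)}\,\|\Delta X\|_{2,N_u^{(d_k)}}$, which pulls out the filter-norm factor $\beta^{(1)} = \sup_{k,u}\|B_k(\cdot,u)\|_{2,N_u^{(d_k)}}$. Second, on the sum over $k$, $|\sum_k a_k \langle\cdots\rangle|^2 \le \|a\|_2^2 \sum_k |\langle\cdots\rangle|^2$, which pulls out $\|a\|_2$. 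Combining the two and summing over $u$ reduces the problem to $\|\Delta Y\|_{2,V}^2 \le (\beta^{(1)})^2\,\|a\|_2^2\, \sum_u \sum_k \|\Delta X\|_{2,N_u^{(d_k)}}^2$.

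The main obstacle, and the step that produces the factor $\sqrt{Kp}$, is controlling the triple sum $\sum_u \sum_k \sum_{v\in N_u^{(d_k)}} |\Delta X(v)|^2$. Here I would exchange the order of summation and count, for each fixed node $v$, the number of pairs $(u,k)$ with $v \in N_u^{(d_k)}$. The key observation is that $G$ is undirected, so the neighborhood relation is symmetric: $v\in N_u^{(d_k)}$ iff $u\in N_v^{(d_k)}$, whence $\#\{u : v\in N_u^{(d_k)}\} = |N_v^{(d_k)}|$ and the multiplicity of $|\Delta X(v)|^2$ equals $\sum_k |N_v^{(d_k)}|$. Applying the hypothesis $\sup_{u\in V}\sum_k |N_u^{(d_k)}| \le Kp$ at the node $v$ bounds each such multiplicity by $Kp$, giving $\sum_u\sum_k \|\Delta X\|_{2,N_u^{(d_k)}}^2 \le Kp\,\|\Delta X\|_{2,V}^2$. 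Substituting this back and taking square roots yields the claimed bound. I expect the symmetry/double-counting step to be the only genuinely nontrivial point; everything else is bookkeeping around the two Cauchy--Schwarz inequalities and the non-expansiveness of $\sigma$.
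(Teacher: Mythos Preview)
Your proposal is correct and matches the paper's proof essentially step for step: non-expansiveness of $\sigma$, the two Cauchy--Schwarz inequalities (over $k$ and over each neighborhood) to extract $\|a\|_2$ and $\beta^{(1)}$, and the symmetry/double-counting argument on the undirected graph to convert $\sum_{u,k}\|\Delta X\|_{2,N_u^{(d_k)}}^2$ into $Kp\,\|\Delta X\|_{2,V}^2$. The only cosmetic difference is the order in which the paper presents the two Cauchy--Schwarz steps, which is immaterial.
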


Note that $p$ indicates the averaged size of the $d_k$-order local neighborhoods.
The proposition implies that when $K$ is $O(1)$,
and the local basis $B_k$'s have $O(1)$ 2-norms on all local parches uniformly bounded by $\beta^{(1)}$,
then the Lipschitz constant of the GNN layer mapping is $O(1)$,
i.e., the product of $\|a\|_2$, $\beta^{(1)}$ and $\sqrt{Kp}$,
which does not scale with $n$.
This resembles the generalizes the 2-norm of a convolutional operator which only involves the norm of the convolutional kernel,
which is possible due to the local receptive fields in the spatial construction of L3Net.

The local graph regularization introduced  in Sec. \ref{sec:reg}
improves the stability of $Y$ w.r.t. $\Delta X$
by suppressing the response 
to local high-frequency perturbations in $\Delta X$.
Specifically, the local 
graph Laplacian $L_u^{(k)}$ on the subgraph on $N_u^{(d_k)}$
is positive definite whenever the subgraph is connected and not isolated from the whole graph.
We then define the weighted 2-norm on local patch
$\|x\|_{L_u^{(k)}} := \langle x, L_u^{(k)} x \rangle_{N_u^{(d_k)}}$,
and similarly $\|x\|_{(L_u^{(k)})^{-1}}$.

\begin{theorem}\label{thm:stable-2}
Notation and setting as in Theorem \ref{thm:stable-1},
if furtherly, all the subgraphs on $N_u^{(d_k)}$ are connected within itself and to the rest of the graph,
and  there is $ \rho  \ge 0$ s.t. 
%$\forall u, k$,
%$\| \Delta X \|_{(L_u^{(k)})^{-1}} \le \rho \| \Delta X \|_{2, N_u^{(d_k)}}$,
\[
\| \Delta X \|_{(L_u^{(k)})^{-1}} \le \rho \| \Delta X \|_{2, N_u^{(d_k)}}, \quad \forall u, k,
\]
then
\[
\| \Delta Y \|_{2,V} \le  \rho \beta^{(2)}  \cdot \|a\|_2 \sqrt{Kp}  \| \Delta X \|_{2,V},
\quad 
\beta^{(2)}: = \sup_{ k,u} \| B_k(\cdot, u) \|_{L_u^{(k)}}.
\]
\end{theorem}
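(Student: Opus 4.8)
The plan is to rerun the proof of Theorem~\ref{thm:stable-1} almost verbatim, replacing its single use of the ordinary Cauchy--Schwarz inequality by a Cauchy--Schwarz inequality taken in the inner product induced by the local graph Laplacian. First I would reduce to the linear part of the map: since $\sigma$ is non-expansive and the bias is unchanged, $|\Delta Y(u)| \le |S(u)|$ with $S(u) := \sum_{k=1}^K a_k \langle B_k(\cdot,u), \Delta X\rangle_{N_u^{(d_k)}}$, so the whole estimate reduces to bounding each local pairing $\langle B_k(\cdot,u), \Delta X\rangle_{N_u^{(d_k)}}$.

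The crux is to control this pairing through the weighted norms rather than the plain Euclidean norm on the patch. Because the hypothesis forces each subgraph on $N_u^{(d_k)}$ to be connected internally and to the rest of the graph, the Dirichlet local Laplacian $L_u^{(k)}$ is positive definite, hence invertible with a well-defined square root. I would then insert the factorization $\langle b, x\rangle = \langle (L_u^{(k)})^{1/2} b, (L_u^{(k)})^{-1/2} x\rangle$ (with $b = B_k(\cdot,u)$ and $x=\Delta X$ restricted to $N_u^{(d_k)}$, using that $(L_u^{(k)})^{1/2}$ is self-adjoint) and apply Cauchy--Schwarz to get
\[
|\langle B_k(\cdot,u), \Delta X\rangle_{N_u^{(d_k)}}| \le \|B_k(\cdot,u)\|_{L_u^{(k)}}\, \|\Delta X\|_{(L_u^{(k)})^{-1}} \le \beta^{(2)} \|\Delta X\|_{(L_u^{(k)})^{-1}}.
\]
Feeding in the assumed comparison $\|\Delta X\|_{(L_u^{(k)})^{-1}} \le \rho \|\Delta X\|_{2,N_u^{(d_k)}}$ reproduces exactly the intermediate bound from Theorem~\ref{thm:stable-1}, but with the constant $\beta^{(1)}$ replaced by $\rho\beta^{(2)}$.

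From there the remainder is mechanically identical to Theorem~\ref{thm:stable-1}: apply Cauchy--Schwarz over the index $k$ to pull out $\|a\|_2$, square, sum over $u$, and exchange the order of summation. The double-counting step uses that the undirected neighborhood relation is symmetric, so each node $u'$ is counted with multiplicity $\sum_k |N_{u'}^{(d_k)}| \le Kp$, giving $\sum_u \sum_k \|\Delta X\|_{2,N_u^{(d_k)}}^2 \le Kp \|\Delta X\|_{2,V}^2$ and hence the stated $\sqrt{Kp}$ factor, with $\rho\beta^{(2)}\|a\|_2$ in front.

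I expect the only genuine subtlety to be the positive-definiteness of $L_u^{(k)}$, which is precisely why the theorem demands that each patch be connected both internally and to the ambient graph: the connection to the rest of the graph supplies the ``boundary'' that upgrades the merely positive-semidefinite combinatorial Laplacian $D-A$ to the strictly positive-definite Dirichlet Laplacian, so that $(L_u^{(k)})^{-1/2}$ and the norm $\|\cdot\|_{(L_u^{(k)})^{-1}}$ are well-defined and the factorization above is legitimate. Everything else is a routine re-run of the earlier computation, and the resulting constants carry the intended interpretation: $\beta^{(2)}$ measures the high-frequency content of the filters (suppressed by the regularization of Sec.~\ref{sec:reg}), while $\rho$ is small exactly when $\Delta X$ is high-frequency on the patches, so the product $\rho\beta^{(2)}$ quantifies the de-noising effect.
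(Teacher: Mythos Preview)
Your proposal is correct and essentially identical to the paper's own proof: both rerun the Theorem~\ref{thm:stable-1} argument, replacing the ordinary Cauchy--Schwarz step~\eqref{eq:CS-1} by the weighted Cauchy--Schwarz $|\langle B_k(\cdot,u),\Delta X\rangle| \le \|B_k(\cdot,u)\|_{L_u^{(k)}}\,\|\Delta X\|_{(L_u^{(k)})^{-1}}$, invoking positive-definiteness of the Dirichlet Laplacian to justify the square-root factorization, and then finishing via the $\rho$-assumption and the same double-counting bound. The only cosmetic difference is that the paper writes the factorization through the eigen-decomposition $L_u^{(k)}=\Psi\Lambda\Psi^T$ explicitly, whereas you phrase it directly in terms of $(L_u^{(k)})^{\pm 1/2}$; these are equivalent.
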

%%\vspace{-10pt}

The bound improves from Theorem \ref{thm:stable-1} when $\rho \beta^{(2)} <  \beta^{(1)}$,
and regularizing by ${\cal R} = \sum_{u,k}\| B_k(\cdot, u) \|_{L_u^{(k)}}^2 $ leads to smaller $\beta^{(2)}$.
Meanwhile, on each $N_u^{(d_k)}$ the Dirichlet eigenvalues increases
$ 0 < \lambda_1 \le \lambda_2 \cdots \le \lambda_{p_{u,k}} $, $p_{u,k}:= |N_u^{(d_k)}|$, 
 thus weighting by $\lambda_l^{-1}$ in $\| \cdot \|_{(L_u^{(k)})^{-1}} $ 
decreases the contribution from high-frequency eigenvectors. 
As a result, $\rho$ will be small if $\Delta X$ contains a significant high-frequency component on the local patch,
e.g., additive Gaussian noise or missing values.
Note that in the weighted $2$-norm of $\Delta X$ by $( L_u^{(k)} )^{-1}$,
only the relative amount of high-frequency component 
in $\Delta X$ matters (because any constant normalization of $L_u^{(k)}$ cancels in the product of $\rho$ and $\beta^{(2)}$).
The benefits of local graph regularization in presence of noise in graph data will be shown in experiments.

\begin{figure}[t]
    \vspace{-10pt}
    \small
    \begin{minipage}[h]{0.1\textwidth}
        \centering
        \includegraphics[width=\textwidth]{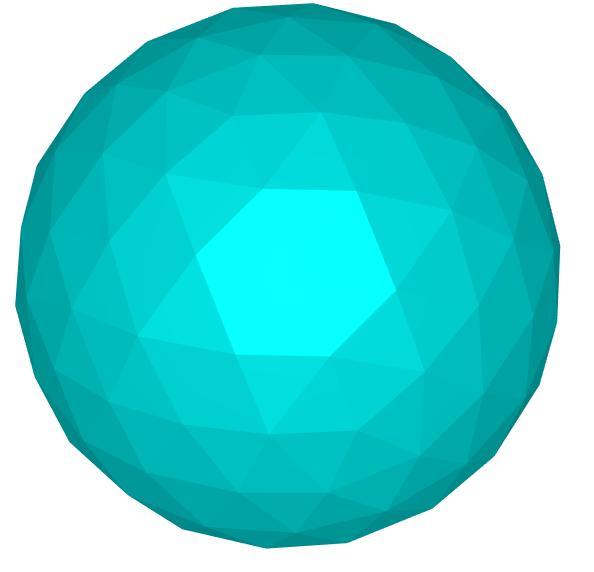}
  %    \label{fig:fine_mesh}
        \vspace{-15pt}
        \caption*{\small \bf level 2}
    \end{minipage}
    \hspace{0.01\textwidth}
    \begin{minipage}[h]{0.1\textwidth}
         \centering
         \includegraphics[width=0.95\textwidth]{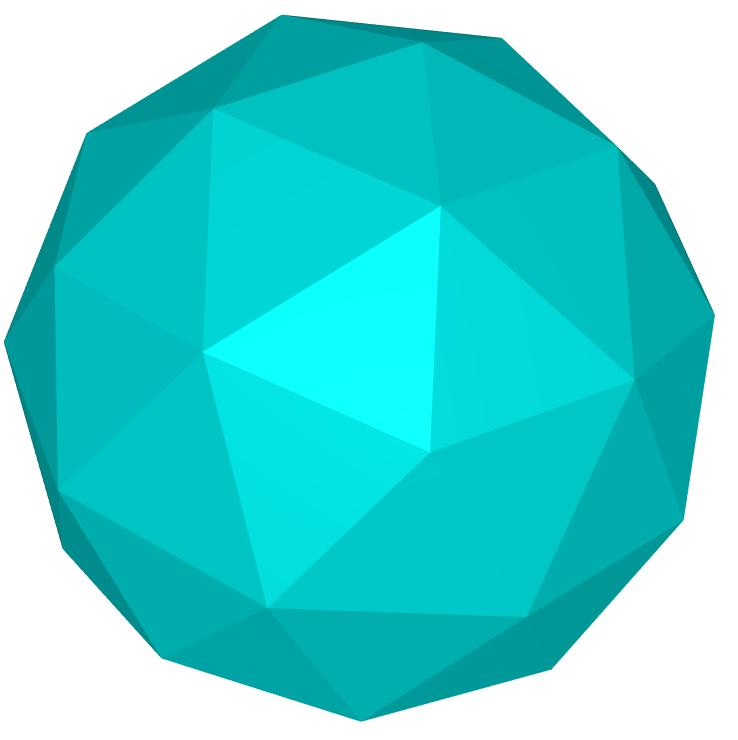}
  %       \label{fig:coarse_mesh}
        \vspace{-15pt}
        \caption*{\small \bf level 1}
    \end{minipage}
    \hspace{0.01\textwidth}
    \begin{minipage}[h]{0.73\textwidth}
        \centering
        \begin{adjustbox}{max width=\textwidth}
        \begin{tabular}{c|c|c|c|c|c|c||c|c} 
        \hline 
        Model & \tabincell{c}{4;3;2\\Acc} & \tabincell{c}{3;2;1\\Acc} & \tabincell{c}{3;2;0\\Acc} & \tabincell{c}{3;1;0\\Acc} & \tabincell{c}{2;2;1\\Acc} & \tabincell{c}{2;1;0\\Acc} & \tabincell{c}{\bf{3;0;0}\\\bf{Acc}} & \tabincell{c}{\bf{2;0;0}\\\bf{Acc}}\\
        \hline  
        \hline
        UGSCNN             & 99.2 & 98.81 & 97.52 & 97.96 & \bf{98.22} & 97.77 & 75.75 & 86.61\\
        \hline
        GCN                  & 95.8 & 90.46 & 75.62 & 84.31 & 94.01 & 83.24 & 27.92 & 37.07\\
        ChebNet        & \bf{99.3}  & 98.50 & 98.07 & 97.07 & 97.12 & 95.51 & 73.1 & 90.73\\
        \hline
        {\bf L3Net} (1;1;2;3)            & 99.1 & \bf{98.81} & \bf{98.89} & \bf{98.60} & 97.76 & \bf{97.97} & \bf{93.14} & \bf{97.26}\\
        \hline
        \end{tabular}
        \end{adjustbox}
    \end{minipage}
    
    \vspace{-5pt}
    \caption{
     \small
    (Plot) Icosahedral spherical meshes at level 2 and 1.
    (Table) Testing accuracies of sphere MNIST under different mesh settings,
	($l1;l2;l3$) stands for the mesh level used in each GNN layer.
	L3Net uses $K$=4, and neighborhood order (1;1;2;3). 
	 S2CNN \cite{s2cnn} on mesh (4;3;2) has accuracy 96.0.
	}
    \label{fig:spheremnsit_exp}
\end{figure}

%\vspace{-5pt}
\section{Experiment}\label{sec:exp}

We test the proposed L3Net model on several datasets.\footnote{Code link: \href{https://github.com/ZichenMiao/L3Net}{https://github.com/ZichenMiao/L3Net}}

\subsection{Object recognition of data on spherical mesh}

We first classify data on a spherical mesh:
sphere MNIST and sphere ModelNet-40,
following the settings in literature.
Though regular mesh on sphere is not the primary application scenario that motivates our model, 
we include the experiments to compare with benchmarks and test the efficiency of L3Net on such regular meshes. 
Following UGSCNN \cite{ugscnn}, we implement different mesh resolution on a sphere, indicated by ``mesh level'' (Fig. \ref{fig:spheremnsit_exp}), where number of nodes in different levels can vary from 2562 (level 4) to 12 (level 0).
All the networks consist of three convolutional layers,
see more details in Appendix \ref{app:detail-1}. 
Using the original mesh level (4;3;2), the finest resolution as in UGSCNN,
L3Net gives among the best accuracies for sphere MNIST. 
On Modelnet-40, 
L3Net achieves a testing accuracy of 90.24, 
outperforming ChebNet and GCN and and is comparable to UGSCNN
which uses spherical mesh information (Tab. \ref{tab:spheremnist-fine}). 
When the mesh becomes coarser, 
as shown in Fig. \ref{fig:spheremnsit_exp} (Table),
L3Net improves over GCN and ChebNet ($L$=4) and is comparable with UGSCNN under nearly all mesh settings. 
We observe that in some settings ChebNet can benefit from larger $L$, but the overall accuracy is still inferior to L3Net.
The most right two columns give two cases of coarse meshes where L3Net shows the most significant advantage.

\begin{table}[tb]
%\vspace{-5pt}
\scriptsize
\centering
\caption{
    % \scriptsize
    Results on CK+ and FER13, with comparison to $\text{CNN}^\dag$\cite{fer_cnn_ding2017}, $\text{CNN}^\ddag$ \cite{fer_cnn_guo2016deep}, landmark method using handcrafted features \cite{fer_lmdk_morales2019use}, and various GNN methods. Specifically, we compare to GAT \cite{gat} with different \#heads (h) and \#features (f). The mean testing time on CK+:
	ChebNet ($L$=4) 12.56ms, L3Net (order 1,1,2,3) 13.02ms. GAT (h=f=8) 39.67ms, (h=f=16) 41.02ms.}
\begin{tabular}{cc|c|c|c|c}
\cline{3-6} 
      &     & \multicolumn{2}{c|}{CK+}    & \multicolumn{2}{c}{FER13} \\
\hline 
\multicolumn{1}{c|}{Model} & \tabincell{c}{Bases\\Order} & \tabincell{c}{\#params\\(w/o FC)} & Acc & \tabincell{c}{\#params\\(w/o FC)} & Acc \\
\hline
\hline
\multicolumn{1}{c|}{ $\text{CNN}^\dag$ } & - & 7M & 98.60 & - & -\\
\multicolumn{1}{c|}{$\text{CNN}^\ddag$.} & - & - & - & 2.6M & 71.33 \\
\multicolumn{1}{c|}{Landmarks-handcraft} & - & - & $91.00\pm0.03$ & - & - \\
\hline
\multicolumn{1}{c|}{GAT (h=8, f=8)}   & 1   & 34.6k  & $91.62\pm1.16$ & 46.9k & 49.50 \\
\multicolumn{1}{c|}{GAT (h=16, f=16)} & 1   & 142.3k & $90.87\pm0.78$ & 151.1k & 48.93 \\
\multicolumn{1}{c|}{GCN }             & 1   & 34.5k  & $91.78\pm0.38$ & 42.6k & 55.54 \\
\multicolumn{1}{c|}{GraphConv}        & 1   & 169.6k & $81.62\pm0.48$ & 215.4k & 55.63 \\
\hline
\multicolumn{1}{c|}{\multirow{3}*{ChebNet }} & $L$=3 & 102.3k & $92.93\pm0.59$ & 136.4k & 59.68 \\
\multicolumn{1}{c|}{~} & $L$=4     & 136.3k & $93.22\pm0.37$ & 181.6k & 60.26 \\
\multicolumn{1}{c|}{~} & $L$=5   & 170.2k & $93.03\pm0.62$ & 227.3k & 60.29 \\
\hline
\multicolumn{1}{c|}{\multirow{2}*{EdgeNet }} & $L$=3 & 103.4k & $92.41\pm0.81$ & 137.2k & 58.73 \\
\multicolumn{1}{c|}{~}              & $L$=4     & 137.1k & $92.57\pm0.84$ & 182.5k & 60.05 \\
\hline
\multicolumn{1}{c|}{\multirow{6}*{\bf{L3Net}}} & 2;2;2  & 102.8k & $95.32\pm0.31$   & 139.7k & 60.46 \\    
\multicolumn{1}{c|}{~}      & 0;1;2;3  & 136.8k & $95.03\pm0.30$   & 182.8k & 60.65 \\
\cline{2-6}
\multicolumn{1}{c|}{~}      & 1;1;2  & \multirow{2}*{102.7k} & $94.68\pm0.56$ & \multirow{2}*{139.4k} & 59.68 \\
\multicolumn{1}{c|}{~}      & +reg${0.005}$ & ~ & $94.52\pm0.61$  & ~ & 61.13 \\
\cline{2-6}
\multicolumn{1}{c|}{~}      & 1;1;2;3  & \multirow{2}*{136.9k} & $\mathbf{95.37\pm0.60}$  & \multirow{2}*{183.0k} & 60.71  \\
\multicolumn{1}{c|}{~}      & +reg${0.5}$ & ~ & $95.11\pm0.44$ & ~ & \bf{61.64}  \\
\hline
\end{tabular}
\label{tab:fer_results}
%\vspace{-5pt}    
\end{table}

\subsection{Facial expression recognition (FER)}

We test on two FER datasets,
Extended CohnKanade (CK+) \cite{ck+_dataset} and FER13 \cite{fer13_dataset}.  
We use 15 facial landmarks, see Fig. \ref{fig:diag1},
and pixel values on a patch around each landmark point as node features.
Details about dataset and model setup are in Appendix \ref{app:detail-2}.
Unlike spherical mesh, facial and body landmarks (next section) are coarse irregular grids where no clear pre-defined mesh operation is applicable. 
We benchmark L3Net with other GNN approaches, as shown in Table \ref{tab:fer_results}.
The local graph regularization strategy is applied on FER13, due to the severe outlier data of landmark detection caused by occlusion. 
On CK+, L3Net leads all non-CNN models by a large margin,
and the best model (1,1,2,3) uses comparable number of parameters with the best ChebNet ($L$=4). 
On FER13, L3Net has lower performance than ChebNet and EdgeNet \cite{isufi2020edgenets}, but outperforms after adding regularization. 
The running times of best ChebNet and L3Net models are comparable, and are much less than GAT's.

\subsection{Action recognition}

We test on two skeleton-based action recognition datasets, 
NTU-RGB+D \cite{shahroudy2016ntu} and Kinetics-Motion \cite{kay2017kinetics}.
The irregular mesh is the 18/25-point body landmarks, 
with graph edges defined by body joints, shown in Fig. \ref{fig:diag1} and Fig. \ref{fig:body_Vitruvian_Man}. 
We adopt ST-GCN \cite{stgcn} as the base architecture,
and substitute the GCN layer %in the original ST-GCN 
with new L3Net layer, called \textbf{ST-L3Net}. 
On Kinetics-Motion, we adopt the regularization mechanism to overcome the severe data missing caused by camera out-of-view. 
See more experimental details in Appendix \ref{app:detail-3}.
We benchmark performance with ST-GCN \cite{stgcn}, ST-GCN (our implementation without using geometric information) and ST-ChebNet (replacing GCN with ChebNet layer),
shown in Table \ref{tab:action}.
L3Net shows significant advantages on two NTU tasks, cross-view and cross-subject settings.
On Kinetics-Motion,  L3Net regains superiority over other models after applying regularization.
The results in both Table \ref{tab:fer_results} and \ref{tab:action} 
indicate that stronger regularization sacrifices expressiveness for clean data and gains stability for noisy data, 
which is consistent with the theory in Sec. \ref{subsec:analysis-stable}.

\begin{table}[th]
%\vspace{-5pt}    
\scriptsize
\centering
\caption{
% \scriptsize 
Results on NTU-RGB+D and Kinetics-Motion}
%\vspace{-5pt}
\begin{tabular}{cc|c|c|c|c|c} 
\cline{3-7}
  &  &  \multicolumn{3}{c|}{NTU-RGB+D} & \multicolumn{2}{c}{Kinetics-Motion} \\ 
\hline
\multicolumn{1}{c|}{Model} & \tabincell{c}{Bases\\order} & \tabincell{c}{\#params\\(w/o FC)} & x-view Acc & x-sub Acc & \tabincell{c}{\#params\\(w/o FC)} & Acc\\ 
\hline
\hline
\multicolumn{1}{c|}{ST-GCN \cite{stgcn}} & 1 & - & 88.30 & 81.50 & - & 72.4 \\
\hline
\multicolumn{1}{c|}{ST-GCN} & 1 & 2.6M & 82.59 & 74.33 & 1.4M & 72.85 \\
\hline
\multicolumn{1}{c|}{\multirow{3}*{ST-ChebNet }} & $L$=3 & 3.1M & 86.40 & 78.24 & 1.8M & 77.91 \\
\multicolumn{1}{c|}{~}                  & $L$=4   & 3.3M & 86.45 & 80.20 & 2.1M & 78.24 \\
\multicolumn{1}{c|}{~}                  & $L$=5 & 3.5M & 76.70 & 71.42 & 2.3M & 77.57 \\
\hline
\multicolumn{1}{c|}{\multirow{4}*{\bf{ST-L3Net}}} & 1;1;2 & \multirow{2}*{3.1M} & 90.78 & \bf{83.64} & \multirow{2}*{1.8M} & 75.20 \\
\multicolumn{1}{c|}{~}                  & +reg${0.01}$ & ~ & 88.38 & 81.54 & ~ & \bf{78.49} \\
\cline{2-7}
\multicolumn{1}{c|}{~}                  & 1;1;2;3    & \multirow{2}*{3.3M} & \bf{91.52} & 82.46 & \multirow{2}*{2.1M} & 75.07 \\
\multicolumn{1}{c|}{~}                  & +reg${0.01}$ & ~ & 89.87 & 80.97 & ~ & 76.68 \\
\hline
\end{tabular}
\label{tab:action}
%\vspace{-3pt}
\end{table}

\subsection{Robustness to graph noise}

To examine the robustness to graph noise,
we experiment on down-sampled MNIST data on 2D regular grid  with 4-nearest-neighbor graph.
With no noise,
on 28$\times$28 data (Tab.  \ref{tab:mnist-28}),
14$\times$14 data (Tab.  \ref{tab:mnist-14}),
and 7$\times$7 data (Tab. \ref{tab:mnist7_gaussian} ``original'' column), 
the performance of L3Net is comparable to ChebNet \cite{chebnet} and EdgeNet \cite{isufi2020edgenets} and better than other GNN methods.
We consider three types of noise,
Gaussian noise added to the pixel value,
missing nodes or equivalently missing value in image input,
and permutation of the node indices,
details in Appendix \ref{app:detail-4}. 
The results of adding different levels of gaussian noise and permutation noise are shown in Tab. \ref{tab:mnist7_gaussian}, while results of adding missing value noise is provided in Appendix \ref{app:detail-4}. The results show that our regularization scheme improves the robustness to all three types of graph noise,
supporting the theory in Sec. \ref{subsec:analysis-stable}.
Specifically,
L3Net without regularization may underperform than ChebNet, 
but catches up after adding regularization, which is consistent with Proposition \ref{prop:large-reg}.

\begin{table}[h]
% %\vspace{-5pt}
\scriptsize
\centering
\caption{
% \scriptsize
Results on MNSIT with grid size $7\times7$ with different levels of Gaussian noise and Permutation noise.
}
%\vspace{-3pt}
\hspace{-8pt}
\begin{adjustbox}{max width=\textwidth}
\begin{tabular}{c|c|c|c||c|c|c||c} 
\hline
Model & \tabincell{c}{bases\\order} & \tabincell{c}{\#params\\(w/o FC)} & Acc(original) & \tabincell{c}{Acc (gaussian)\\(psnr 24.9)} & \tabincell{c}{Acc (gaussian)\\(psnr 19.1)} & \tabincell{c}{Acc (gaussian)\\(psnr 15.7)} & \tabincell{c}{Acc\\(permutation)}\\
\hline
\hline
GCN           & 1       & 2.4k  & $90.02\pm0.24$ & $89.27\pm0.09$ & $85.70\pm0.13$ & $81.32\pm0.18$  & $83.00\pm0.18$ \\
\hline
\multirow{3}*{ChebNet}  & $L$=3   & 6.5k  & $92.85\pm0.09$ & $91.13\pm0.15$ & $87.64\pm0.23$ & $82.70\pm0.33$ & $86.94\pm0.06$\\
~                       & $L$=5  & 10.7k & $93.2\pm0.07$ & $91.92\pm0.11$ & $88.22\pm0.10$ & $83.04\pm0.12$ & $87.27\pm0.23$\\
~                       & $L$=7  & 14.8k & $93.45\pm0.06$ & $91.80\pm0.10$ & $87.84\pm0.15$ & $83.75\pm0.14$ & $87.53\pm0.19$\\
\hline
GAT (h=8,f=16) & 1     & 17.5k  & $79.50\pm1.24$ & $68.68\pm0.45$ & $64.8\pm1.69$ & $65.38\pm1.03$ & $62.21\pm0.56$ \\
\hline
MPNN            & 1     & 18.8k  & $86.94\pm0.37$ & $85.36\pm0.51$ & $82.23\pm0.35$ & $77.59\pm0.34$ & $77.55\pm0.26$ \\
\hline
WLN       & 1     & 17.1k  & $87.61\pm0.04$ & $86.01\pm0.20$ & $83.60\pm0.09$ & $79.47\pm0.11$ & $80.51\pm0.05$ \\
\hline
\multirow{2}*{EdgeNet}  & $L$=3  & 7.5k & $93.26\pm0.16$ & $91.81\pm0.14$ & $88.42\pm0.36$ & $84.56\pm0.40$ & $87.15\pm0.30$ \\
~                       & $L$=4  & 10.1k & $93.44\pm0.17$ & $92.27\pm0.16$ & $88.60\pm0.17$ & $84.15\pm0.59$ & $87.44\pm0.28$ \\
\hline
\multirow{4}*{\bf{L3Net}} & 0;1;2 & 8.1k & $93.45\pm0.10$ & - & - & - & -\\
\cline{2-8}
~                       & 1;1;2 & \multirow{2}*{8.4k}  & $93.56\pm0.08$ & $92.10\pm0.08$ & $88.20\pm0.13$ & $83.00\pm0.33$ & $87.58\pm0.19$ \\
~                       & +reg${0.5}$ & ~  & $ \mathbf{93.85\pm0.13} $ & $92.31\pm0.07$ & $\mathbf{89.23\pm0.10}$ & $84.59\pm0.23$ & $88.08\pm0.18$ \\
\cline{2-8}
~                       & 1;1;2;3  & \multirow{2}*{12.2k}  & $93.67\pm0.15$ & $92.25\pm0.15$ & $88.28\pm0.16$ & $82.80\pm0.37$ & $87.66\pm0.12$ \\
~                       & +reg${0.5}$ & ~ & ${93.85\pm0.15}$ & $\mathbf{92.56\pm0.12}$ & $89.15\pm0.24$ & $\mathbf{84.61\pm0.25}$ & $\mathbf{88.21\pm0.15}$ \\
\hline
\end{tabular}
\end{adjustbox}
\label{tab:mnist7_gaussian}
%\vspace{-10pt}
\end{table}

\section{Conclusion and Discussion}

The paper proposes a new graph convolution model using learnable local filters decomposed over a small number of basis.
Strengths:
Provable enhancement of model expressiveness with significantly reduced model complexity from locally connected GNN.
Improved stability and robustness via local graph regularization, supported by theory. 
Plug-and-play layer type,
suitable for GNN graph signal classification problems on relatively unchanging small underlying graphs, 
like face/body landmark data in FER and action recognition applications. 

Limitations and extensions:
(1) Scalability to larger graph. When $|V|=n$ is large, 
the complexity increase in the $npK$ term would be significant.
The issue in practice can be remedied by mixing use of layer types,
e.g., only adopting L3Net layers in upper levels of mesh which are of reduced size.
(2) Dynamically changing underlying graph across samples.
For more severe changes of the underlying graph, 
we can benefit from solutions such as node registration or other preprocessing techniques,
possibly by another neural network.
(3) Incorporation of edge features. 
Edge features can be transformed into 
extra channels of node features by an additional layer in the bottom, 
and the low-rank graph operation can be similarly employed there. 

\section*{Acknowledgement}
The work is supported by NSF DMS-1820827. XC is also partially supported by NIH and the Alfred P.
Sloan Foundation.

%\clearpage
\bibliography{gnn}
\bibliographystyle{plain}

\appendix

\setcounter{table}{0}
\setcounter{figure}{0}
\setcounter{lemma}{0}
\renewcommand{\thetable}{A.\arabic{table}}
\renewcommand{\thefigure}{A.\arabic{figure}}
\renewcommand{\thelemma}{A.\arabic{lemma}}

%\newpage

\section*{Appendix}

\section{Proofs}\label{sec:proofs}

\subsection{Details and proofs in Sec. \ref{subsec:unified}}

\subsubsection{ Locally connected GNN}

Specifically, the construction in \cite{coates2011selecting,bruna2013spectral} assumes that
$u$ and $u'$ belongs to the graph of different scales, $u'$ is on the fine graph, and $u$ is on a coarse-grained layer produced by clustering of indices of the graph of the input layer. 
If one generalize the construction to allow over-lapping of the receptive fields, and assume no pooling or coarse-graining of the graph, then the non-zero parameters are of the number 
\[
\sum_{u \in V} |N_{u}| \cdot C C' = n p \cdot C C',
\]
where $n = |V|$, $p$ is the average patch size $|N_u|$, and $C$ and $C'$ are the number of input and output feature channels.

\subsubsection{  ChebNet/GCN, GAT and Edgenet}

$\bullet$ Chebet/GCN

In view of \eqref{eq:gnn},
ChebNet \cite{chebnet} makes use of the graph adjacency matrix to construct $M$. 
Specifically, 
$A_{sym}:= D^{-1/2} A D^{-1/2}$ is the symmetrized graph adjacency matrix (possibly including self-edge, then $A$ equals original $A$ plus $I$),
and $L_{sym} := I - A_{sym}$  has spectral decomposition $L_{sym} = \Psi \Lambda \Psi^T$.
Let $\tilde{L} = \alpha_1 I + \alpha_2 L_{sym}$ be the rescaled and re-centered graph Laplacian such that the eigenvalues are between $[-1,1]$,
$\alpha_1, \alpha_2$ fixed constants.
Then, written in $n$-by-$n$ matrix form,
\begin{equation}\label{eq:M-cheb-1}
M_{c',c} = \sum_{l=0}^{L-1}  \theta_l(c',c) T_l( \tilde{L} ),
\quad \theta_l(c',c) \in \R,
\end{equation}
where $T_l(\cdot)$ is Chebshev polynomial of degree $l$.
As $A_{sym}$ and then $\tilde{L}$ are given by the graph, only $\theta_l$'s are trainable, 
thus the number of parameters are 
\[
L \cdot CC'.
\]

GCN \cite{gcn} is a special case of ChebNet.
Take $L=2$ in \eqref{eq:M-cheb-1}, and tie the choice of $\theta_0$ and $\theta_1$,
\[
M_{c',c} =  \theta(c',c) (\alpha_1' I + \alpha_2' A_{sym}) =:  \theta(c',c) \tilde{A}, 
\quad \alpha_1', \alpha_2' \text{ fixed constants,}
\]
where $\theta(c',c)$ is trainable. 
This factorized form leads to the linear part of the layer-wise mapping as $Y = \tilde{A} X \Theta$ written in matrix form, 
where $ \tilde{A}$ is $n$-by-$n$ matrix defined as above,
$X$ ($Y$) is $n$-by-$C'$ (-$C$) array, $\Theta$ is $C'$-by-$C$ matrix.
 The model complexity is $C C'$ which are the parameters in $\Theta$.

 $\bullet$ GAT
 
  In GAT \cite{gat}, $R$ being the number of attention heads, the graph convolution operator in one GNN layer can be written as (omitting bias and non-linear mapping)
\begin{equation}\label{eq:def-gat}
Y = \sum_{r=1}^R {\cal A}^{(r)} X \Theta_r, 
\quad {\cal A}^{(r)}_{u,v} = \frac{ e^{c^{(r)}_{uv} } }{ \sum_{v' \in N_u^{(1)}}  e^{ c^{(r)}_{uv'} } }, 
\quad c_{uv}^{(r)} = \sigma( (a^{(r)})^T [ W^{(r)}X_u,  W^{(r)}X_v ]  ),
\end{equation}
where $\{ W^{(r)}, a^{(r)}\}$ are the trainable parametrization of attention graph affinity mechanism ${\cal A}^{(r)}$, which constructs non-negative affinities between graph nodes $u$ and $v$ adaptively from the input graph node feature $X$. 
In particular, ${\cal A}^{(r)}$ shares sparsity pattern as the graph topology,
that is, ${\cal A}^{(r)}(u,u') \neq 0$ only when $u' \in N_u^{(1)}$.

In the original GAT, $\Theta_r = W^{(r)} {\bf C}^{(r)}$, where ${\bf C}^{(r)}$'s are fixed matrices such that the output from $r$-th head is concatenated into the output $Y$ across $r=1,\cdots, R$. 
Variants of GAT adopt channel mixing across heads,
e.g. a generalization of GAT in \cite{isufi2020edgenets} uses extra trainable $\Theta_r$ in \eqref{eq:def-gat} independent from $W^{(k)}$.
\cite{isufi2020edgenets} also proposed higher-order GAT by considering powers of the affinity matrix ${\cal A}^{(r)} $ as well as the edge-varying version
(c.f. Eqn. (36)(39) in \cite{isufi2020edgenets}). 
 As this higher-order GAT and the edge-varying counterpart are special cases of the edgy-varying GNN, we cover this case in 
 Proposition \ref{prop:larger-than-cheb} 3).

 The model complexity of GAT:
 In the original GAT where $\Theta_r$ is tied with $W^{(r)}$,
 the number of parameters in one layer is 
 $R(C_0 C' + 2C_0)$, where $R$ is the number of attention heads, $C = C_0 R$, and $W^{(r)}: \R^{C'} \to \R^{C_0}$. 
 When $\Theta_r$ are free from $\{ W^{(r)}, a^{(r)}\}$  in \eqref{eq:def-gat}, the number of parameters is $R( CC'+C_0C' + 2C_0) \le R(2CC'+2C)$,
 where $W^{(r)}$ maps to dimension $C_0$ and $\Theta_r$ maps to dimension $C$.

%Edgenet
$\bullet$ EdgeNet (Edge-varying GCN)

Per Eqn. (1)(8) in \cite{isufi2020edgenets}, the edge-varying GNN layer mapping can be written as 
\begin{equation}\label{eq:def-edge-varying}
Y = \sum_{r=0}^{L-1} \left( \prod_{k=0}^r \Phi_k \right) X \Theta_r,
\end{equation}
where $\Phi_0$ is an $n$-by-$n$ diagonal matrix, and $\Phi_k$, $k=1,\cdots, r$,
are supported on $N_u^{(1)}$ of each node $u$. 
The trainable parameters are $\{\Phi_k\}_{k=0}^R$ and $\{\Theta_r\}_{r=0}^R$, $\Theta_r : \R^{C'}\to \R^C$. 
Edge-varying GAT implements polynomials of averaging filters,
and general edge-varying GNN takes product of arbitrary 1-order filters.
The proof shows that EdgeNet layer is a special case of L3Net layer,
while restricting $B_k$ to be of the product form \eqref{eq:Bk-prod}
rather than freely supported on $N_u^{(d_k)}$ for user-specified order $(d_1,\cdots,d_K)$
is a non-trivial restriction.

The trainable parameters: $\Theta_r$ has $LCC'$ many, 
$\Phi_0$ has $n$,
and $\Phi_k$, $k=1,\cdots, L-1$ each has  $n p^{(1)}$ many, $p^{(1)}$ being the average size o 1-neighborhood of nodes.
Thus the total number of parameters is 
\[
LCC' + n + (L-1) n p^{(1)} 
\sim  L ( CC' + n p^{(1)} ).
\]

\begin{proof}[Proof of Proposition \ref{prop:larger-than-cheb}]
Part (1): Since GCN is a special case of ChebNet, 
it suffices to prove that \eqref{eq:M-cheb-1} can be expressed in the form of L3Net \eqref{eq:M-ours} for some $K$.
By definition of $\tilde{L}$, mathematically equivalently, 
\begin{equation}\label{eq:M-cheb-2}
M_{c',c} = \sum_{l=0}^{L-1}  \theta_l(c',c) T_l(  \alpha_1 I + \alpha_2 L )
= \sum_{l=0}^{L-1}  \theta_l(c',c) T_l(  \alpha_1 I + \alpha_2 (I -A_{sym}) )
= \sum_{l=0}^{L-1} \beta_l(c',c) A_{sym}^l,
\end{equation}
where the coefficients $\beta_l$'s are determined by $\theta_l$'s,
per $(c',c)$.
Since $A_{sym}^l$ propagates to the $l$-th order neighborhood of any node,
setting $B_{k}(u',u) = A_{sym}^{k-1}(u',u)$, $B_k(u',u)$ is non-zero when $u' \in N_u^{(k-1)}$,
$1 \le k \le K := L$,
and then setting $a_k(c',c) = \beta_{k-1}(c',c)$ 
gives  \eqref{eq:M-cheb-1} in the form of 
\eqref{eq:M-ours}.

Part (2): We consider \eqref{eq:def-gat} as the GAT model.
Recall that $\Theta_r: \R^{C'} \to \R^C$, then \eqref{eq:def-gat} can be re-written in the form of \eqref{eq:gnn} by letting
\[
M(u', u; c', c) 
= \sum_{r=1}^R {\cal A}^{(r)}(u',u) \Theta_r(c',c),
\]
which is a special case of \eqref{eq:M-ours} where
 $R=K$, ${\cal A}^{(k)} = B_k$ and $\Theta_k = a_k$.
Since ${\cal A}^{(r)}(u,u')$ as a function of $u'$ is supported on $u' \in N_u^{(1)}$,
\eqref{eq:def-gat} belongs to the L3Net model \eqref{eq:M-ours} where $d_1 = \cdots = d_K = 1$,  
 in addition to that $B_k$ must be of the attention affinity form, 
 i.e. built from the attention coefficients $c_{uv}^{(r)}$ computed from input $X$ via parameters $\{ W^{(r)}, a^{(r)}\}$.

Part (3):
Comparing with \eqref{eq:gnn}\eqref{eq:M-ours},
we have that \eqref{eq:def-edge-varying}
is a special case of L3Net \eqref{eq:M-ours} 
by letting $K = L$,
\begin{equation}\label{eq:Bk-prod}
B_k = \prod_{k'=0}^{k-1} \Phi_{k'},
\end{equation}
$a_{k}=\Theta_{k-1}$, 
and $d_k=k-1$ for $k=1,\cdots,K$.
\end{proof}

\subsubsection{ Standard and geometrical CNN's}

Standard CNN on $\R^d$, e.g. $d=1$ for audio signal and $d=2$ for image data,
applies a discretized convolution to the input data in each convolutional layer,
which can be written as (omitting bias which is added per $c$, and the non-linear activation)
\begin{equation}\label{eq:standard-cnn}
y(u,c) = \sum_{c' \in [C']} \sum_{u' \in U} w_{c',c}(u'-u)  x(u', c'),
\end{equation}
where $U$ is a grid on $\R^d$.
We write in the way of ``anti-convolution'', which has ``$u'-u$'' rather than ``$u-u'$'',
but the definition is equivalent. 
For audio and image data, $U$ is usually a regular mesh with evenly sampled grid points,
and proper boundary conditions are applied when computing $y(u,c)$ at a boundary grid point $u$.
E.g., boundary can be handled by standard padding as in CNN.
As the convolutional filters $w_{c',c}$ are compactly supported, the summation of $u'$ is on a neighborhood of $u$. 

More generally, CNN's on non-Euclidean domains are constructed when spatial points are sampled on an irregular mesh in $\R^d$, 
e.g., a 2D surface in $\R^3$. 
The generalization of \eqref{eq:standard-cnn} is by defining the ``patch operator" \cite{masci2015geodesic}
 which pushes a template filter $w$ on a regular mesh on $\R^d$, $d$ being the intrinsic dimensionality of the sampling domain,
to the irregular mesh in the ambient space that have coordinates on local charts. 
Specifically, for a mesh of 2D surface in 3D, 
$d=2$, and $w$ is a template convolutional filter on $\R^2$.
For any local cluster of 3D mesh points $N_u$  around a point $u$, 
the patch operator ${\cal P}_u$ provides $({\cal P}_u w)(u')$ for $u' \in N_u$
by certain interpolation scheme on the local chart.
The operator ${\cal P}_u$ is linear in $w$, and possibly trainable.
As a result, in mesh-based geometrical CNN,
\begin{equation}\label{eq:geo-cnn}
y(u,c) = \sum_{c' \in [C']} \sum_{u'} ({\cal P}_u w_{c',c})(u')  x(u', c'),
\end{equation}
and one can see that in Euclidean space taking $({\cal P}_u w)(u') = w(u'-u)$
reduces \eqref{eq:geo-cnn} to the standard CNN as in \eqref{eq:standard-cnn}.

In both \eqref{eq:standard-cnn} and \eqref{eq:geo-cnn},
spatial low-rank decomposition of the filters $w_{c',c}$ can be imposed \cite{qiu2018dcfnet}.
This introduces a set of bases $\{ b_k \}_k$ over space that linearly span the filters $w_{c',c}$.
For standard CNN in $\R^d$, $b_k$ are basis filters on $\R^d$,
and for geometrical CNN, they are defined on the reference domain in $\R^d$ same as $w_{c',c}$, where $d$ is the intrinsic dimension.
Suppose $w_{c',c} = \sum_{k=1}^K \beta_{k,(c',c)} b_k$
for coefficients $\beta_{k,(c',c)}$,
by linearity,
\eqref{eq:geo-cnn} becomes
\begin{align}\label{eq:geo-dcf-cnn}
y(u,c)  = \sum_{c' \in [C']} \sum_{u'} \sum_{k=1}^K \beta_{k,(c',c)}  ({\cal P}_u b_k )(u')  x(u', c'),
\end{align}
and similarly for \eqref{eq:standard-cnn}.
The trainable parameters in \eqref{eq:geo-dcf-cnn}
are $\beta_{k,(c',c)} $ and the basis filters $b_k$'s,
the former has $K CC'$ parameters, 
and the latter has $\sum_{k} p_k$,
where $p_k$ is the size of the support of $b_k$ in $\R^d$.
Suppose the average size is $p$, then the number of parameters is $Kp$. 
This gives the total number of parameters as
\[
K C C' + Kp.
\]

\begin{proof}[Proof of Proposition \ref{prop:larger-than-dcfcnn}]
Since standard CNN is a special case of geometrical CNN \ref{eq:geo-cnn},
we only consider the latter.
Assuming low-rank filter decomposition,
the convolutional mapping is \eqref{eq:geo-dcf-cnn}.
Comparing to the GNN layer mapping defined in \eqref{eq:gnn}, one sees that
\[
M(u',u; c',c) = \sum_{k=1}^K \beta_{k,(c',c)}  ({\cal P}_u b_k )(u'),
\]
which equals \eqref{eq:M-ours} if setting $B_k(u',u) = ({\cal P}_u b_k )(u')$
and $a_k(c',c) = \beta_{k,(c',c)} $. 
\end{proof}

\subsubsection{Strong regularization limit}

\begin{proof}[Proof of Proposition \ref{prop:large-reg}]
The constrained minimization of ${\cal R}$ 
defined in  \eqref{eq:def-reg} separates for each $u, k$,
and the minimization of $b_u^{(k)}$ is given by
\begin{equation}\label{eq:min-w}
\min_{w : N_u^{(d_k)} \to \R} w^T L_u^{(k)} w,
\quad \text{ s.t. } \|w\|_2 \ge \alpha_{u,k} > 0.
\end{equation}
For each $u, k$,
the local Dirichlet graph Laplacian  $L_u^{(k)}$  has eigen-decomposition $L_u^{(k)} = \Psi_u^{(k)} \Lambda_u^{(k)} (\Psi_u^{(k)})^T$,
where  $(\Psi_u^{(k)})^T \Psi_u^{(k)} = I$,
 and the diagonal entries of $\Lambda_u^{(k)}$ are eigenvalues of $L_u^{(k)}$,
 which are all $ \ge 0$ and sorted in increasing order.
By the variational property of eigenvalues,
 the minimizer of $w$ in \eqref{eq:min-w} is  achieved when 
 $w = \Psi_u^{(k)} ( \cdot,1)$, 
 i.e., the eigenvector associated with the smallest eigenvalue of $L_u^{(k)}$.  
By that the local subgraph is connected, 
this smallest eigenvalue has single multiplicity,
and the eigenvector is the Perron-Frobenius vector which does not change sign. 
The claim holds for arbitrary $\alpha_{u,k} > 0$ since eigenvector is 
defined up to a constant multiplication. 
\end{proof}

\subsection{Proofs in Sec. \ref{subsec:expressive}}

\begin{figure}
    \centering
    \includegraphics[trim={113pt 530pt 195pt 10pt},clip,width=.7\linewidth]{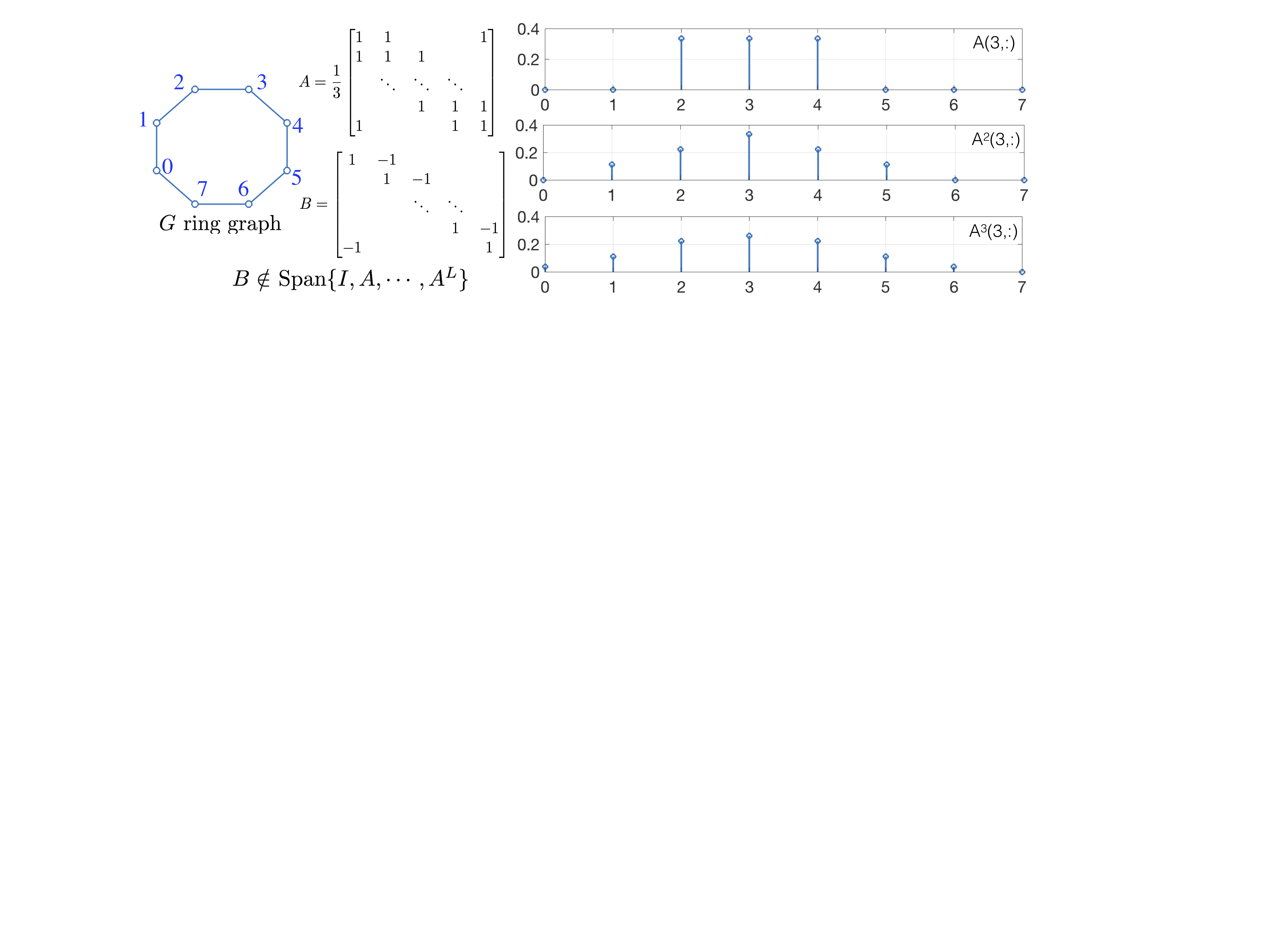}
    \caption{ A ring graph with $8$ nodes.
        Polynomials of graph adjacency matrix $A$
        (or Laplacian matrix)
        preserve symmetry of mirroring around any node, e.g., node 3, and can cannot express a local filter $B$}
    \label{fig:ring_expressive}
\end{figure}

\begin{proof}[Proof of Proposition \ref{prop:expressive}]
Part 1): Let the graph be the ring graph with $n$ nodes,
and each node has 2 neighbors,
$n$=8 as shown in Fig. \ref{fig:diag1} (right).
We index the nodes as $u = 0, \dots, n-1$ and allows addition/subtraction of $u-v$ (mod $n$).
Let $B$ be the ``difference'' filter $B(u',u) = 1$ when $u' = u$ and $-1$ when $u' = u+1$.
We show that $B \neq f(A)$ for any $f$,
and in contrast, setting this $B$ as the basis in \eqref{eq:M-ours} expresses the filter with $K=1$.

To prove that $B \neq f(A)$ for any $f$,
let $\pi_u$ be the permutation of the $n$ nodes such that 
$\pi_u( u+v) = ( u-v) $ for all $v$, 
i.e., mirror flip the ring around the node $u$.
By construction, the graph topology of the ring graph is preserved under $\pi_u$,
that is, $A_{\pi_u}:=\pi_u A \pi_u^T = A$,
whether $A$ is the 0/1 value adjacency matrix or the symmetrically normalized one 
$A_{sym}=D^{-1/2}AD^{-1/2}$ ($D$ is constant on diagonal)
or other normalized version as long as the relation $A_{\pi_u}= A$ holds.
By Lemma \ref{lemma:node-permutation} 1), for any $f: \R \to \R$,
\[
f(A) \pi_u = f(A_{\pi_u}) \pi_u = \pi_u f(A),
\]
this means that if $B = f(A)$ for some $f$,
then $B \pi_u = \pi_u B$, which contradicts with the construction of $B$.

Part 2): Consider the two distributions of graph signals on the ring graph in 1),
which we call ``upwind/downwind'' signals: 
$X_{up}$ consists of finite superpositions of functions on the ring graph 
which are periodic, smoothly increasing from 0 to 1 and then dropping to zero.
Signals in $X_{up}$ are under certain distribution,
and $X_{down}$ consists of the signals that can be produced by mirror-flipping the upwind signals.
That is, denoting $x_{up}$ ($x_{down}$) an upwind (downwind) signal, 
$\pi_u$ the permutation as in 1) around any node $u$, 
then
\[
 \pi_u x_{up}  \overset{\text{dist.}}{=}  x_{down},
\]
where $ \overset{\text{dist.}}{=} $ means equaling in distribution. 
Example signals of the two classes as illustrated in Fig. \ref{fig:updown}.

Same as in 1), 
by construction $A_{\pi_u} = A$.
Let $F^{(L)}$ be the mapping to the $L$-th layer spectral GNN feature,
for $x_{up}$ an upwind signal,
Lemma \ref{lemma:node-permutation} 2) gives that
\[
F^{(L)}[A] \pi_u x_{up} 
= F^{(L)}[A_{\pi_u}] \pi_u x_{up} 
= \pi_u F^{(L)}[A] x_{up}.
\]
The last layer applies group invariant operator $U$, then
\[
U F^{(L)}[A] \pi_u x_{up}  
= U \pi_u F^{(L)}[A] x_{up}
= U F^{(L)}[A] x_{up},
\]
this gives that 
\[
U F^{(L)}[A] x_{down}  
 \overset{\text{dist.}}{=}
 U F^{(L)}[A] \pi_u x_{up}  
 = U F^{(L)}[A] x_{up},
\]
which means that the final output deep feature via $U F^{(L)}[A]$
are statistically the same for the input signals from the two classes. '

Meanwhile,
the difference local filter $B$ in the proof of 1) 
can extract feature to differentiate the two classes,
and then L3Net with 1 layer and 1 basis suffices to distinguish the $X_{up}$
and $X_{down}$ signals. 
\end{proof}
\begin{lemma}[Permutation equivariance, Proposition 1 in \cite{gama2019stability}]
\label{lemma:node-permutation}
Let  $A$ be the (possibly normalized) graph adjacency matrix,
for any input signal $x: V \to \R$,
and  $\pi \in {\cal S}_n$ a permutation of graph nodes, 

1) The spectral graph convolution mapping $f(A)$ satisfies that 
\[
f(A_\pi ) \pi  = \pi f(A),
\quad 
A_\pi := \pi A \pi^T.
\]

2) Let $F^{(l)}[A]$ be the mapping to the $l$-th layer spectral GNN feature with graph adjacency $A$, then
\[
F^{(l)}[A_{\pi}] \pi x = \pi F^{(l)}[A] x.
\]
\end{lemma}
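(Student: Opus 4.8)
The plan is to regard the node permutation $\pi \in {\cal S}_n$ as an $n$-by-$n$ orthogonal permutation matrix, so that $\pi^T = \pi^{-1}$ and $A_\pi = \pi A \pi^T$ is an orthogonal conjugation of $A$. The entire argument rests on one algebraic fact, that spectral filtering commutes with conjugation by an orthogonal matrix, and then a layerwise induction propagates this through the network.

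For Part 1, I would first establish $f(\pi A \pi^T) = \pi f(A) \pi^T$. If $f$ is a polynomial this is immediate from $(\pi A \pi^T)^m = \pi A^m \pi^T$ (the inner $\pi^T\pi$ factors cancel) together with linearity; for a general $f$ defined by functional calculus one uses the eigendecomposition $A = \Psi \Lambda \Psi^T$ and observes that $\pi A \pi^T = (\pi\Psi)\Lambda(\pi\Psi)^T$ is again an eigendecomposition, with orthonormal eigenvectors $\pi\Psi$ and the same eigenvalues, so $f(\pi A \pi^T) = (\pi\Psi) f(\Lambda)(\pi\Psi)^T = \pi f(A)\pi^T$. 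Right-multiplying by $\pi$ and using $\pi^T\pi = I$ then gives $f(A_\pi)\pi = \pi f(A)\pi^T \pi = \pi f(A)$, which is the claim.

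For Part 2, I would proceed by induction on the layer index $l$, writing one spectral GNN layer acting on an $n$-by-$C$ feature matrix $H$ in the form $H \mapsto \sigma\big( f(A)\,H\,\Theta + \mathbf{1} b^T \big)$, where $\Theta$ mixes channels, $b$ is a per-channel bias broadcast across the $n$ nodes, and $\sigma$ is applied entrywise (multi-channel filtering is a finite sum of such terms over input/output channels, which does not affect the argument). The key observation is that left-multiplication by $\pi$ permutes the node index, i.e. the rows of $H$, and therefore commutes with every within-layer operation that acts either entrywise or only on the channel index: entrywise $\sigma$ satisfies $\sigma(\pi Z) = \pi\,\sigma(Z)$, right-multiplication obeys $(\pi Z)\Theta = \pi(Z\Theta)$, and the broadcast bias satisfies $\pi \mathbf{1} b^T = \mathbf{1} b^T$ since $\pi \mathbf{1} = \mathbf{1}$. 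The inductive step then reads: assuming $F^{(l)}[A_\pi]\pi x = \pi F^{(l)}[A] x$, apply Part 1 and the hypothesis to get $f(A_\pi) F^{(l)}[A_\pi]\pi x = f(A_\pi)\pi F^{(l)}[A]x = \pi f(A) F^{(l)}[A]x$, then push $\pi$ through $\Theta$, the bias, and $\sigma$ to conclude $F^{(l+1)}[A_\pi]\pi x = \pi F^{(l+1)}[A]x$; the base case $l=0$ is $F^{(0)} = \mathrm{id}$.

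I expect the main obstacle to be bookkeeping rather than conceptual: one must fix a clean convention for how $\pi$, the channel mixing $\Theta$, and the bias act on the $n$-by-$C$ feature array so that the phrase ``$\pi$ acts on nodes only'' is precise, and then verify the three commutation identities in exactly that convention. The only genuinely mathematical point is the functional-calculus step of Part 1 for non-polynomial $f$, which requires $A$ to be orthogonally diagonalizable; this holds for the symmetric adjacency or (normalized) Laplacian used here, so the eigendecomposition argument applies without further hypotheses.
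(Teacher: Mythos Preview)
Your proposal is correct and follows essentially the same approach as the paper: Part~1 is proved via the eigendecomposition $A = \Psi\Lambda\Psi^T$ and the observation that $f(A_\pi) = \pi f(A)\pi^T$, and Part~2 by a layerwise recursion using that the bias and entrywise nonlinearity commute with node permutation. The paper's own Part~2 is stated more tersely (one sentence), whereas you spell out the commutation of $\pi$ with $\Theta$, the broadcast bias, and $\sigma$ explicitly; this extra bookkeeping is fine and does not constitute a different method.
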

\begin{proof}[Proof of Lemma \ref{lemma:node-permutation}]
Proved in \cite{gama2019stability}
and we reproduce with our notation for completeness.

Part 1):  Denote the $n$-by-$n$ permutation matrix also by $\pi$,
then by definition, $f(A) = U f(\Lambda) U^T$ where $A = U \Lambda U^T$ is the diagonalization and $U$ is orthogonal matrix,
thus
\[
f(A_\pi) = f( \pi U \Lambda U^T \pi^T )
=  \pi U f( \Lambda) U^T \pi^T  
= \pi f(A) \pi^T,
\] 
and this proves 1).

Part 2): Each spectral GNN layer mapping adds the bias and the node-wise non-linear activation mapping
 to the graph convolution linear operator,
which preserves the permutation equivariance. 
Recursively applying to $L$ layers proves 2).
\end{proof}

\subsection{Proofs in Sec. \ref{subsec:analysis-stable}}

\begin{proof}[Proof of Theorem \ref{thm:stable-1}]
By definition,
\[
Y(u) = \sigma(  \sum_{k=1}^K a_k \langle B_k(\cdot, u), X(\cdot) \rangle_{N_u^{(d_k)}}   + \text{bias}),
\]
then since $\sigma$ is non-expansive, $\forall u \in V$,
\begin{equation}\label{eq:bound-dY-1}
| \Delta Y(u)  | 
\le 
| \sum_{k=1}^K a_k \langle B_k(\cdot, u), \Delta X(\cdot) \rangle_{N_u^{(d_k)}} |
\le \|a\|_2  
\left( \sum_{k=1}^K  |\langle B_k(\cdot, u), \Delta X(\cdot) \rangle_{N_u^{(d_k)}} |^2  \right)^{1/2}.
\end{equation}
By that 
\begin{equation}\label{eq:CS-1}
| \langle B_k(\cdot, u), \Delta X(\cdot) \rangle_{N_u^{(d_k)}}| 
\le \|  B_k(\cdot, u) \|_{2, N_u^{(d_k)} }  \cdot \|  \Delta  X (\cdot) \|_{2, N_u^{(d_k)} },
\end{equation}
we have that 
\begin{align}
\sum_{u \in V}| \Delta Y(u)  |^2 
& \le 
\|a\|_2^2
 \sum_{u} \sum_{k=1}^K  |\langle B_k(\cdot, u), \Delta X(\cdot) \rangle_{N_u^{(d_k)}} |^2  
\nonumber \\
 & \le  \|a\|_2^2
 \sum_{u} \sum_{k=1}^K   \|  B_k(\cdot, u) \|_{2, N_u^{(d_k)} }^2  \cdot  \| \Delta  X (\cdot) \|_{2, N_u^{(d_k)} }^2
\nonumber \\
 & \le  (\|a\|_2 \beta^{(1)})^2 \sum_{u, k } \| \Delta  X (\cdot) \|_{2, N_u^{(d_k)} }^2,
 \label{eq:bound-dy2-2}
\end{align}
and observe that
\begin{align*}
& \sum_{u, k } \| \Delta  X (\cdot) \|_{2, N_u^{(d_k)} }^2
 = \sum_{k=1}^K \sum_{u \in V} \sum_{v \in N_u^{(d_k)} } |\Delta X(v)|^2
=  \sum_{k=1}^K \sum_{u,v \in V} {\bf 1}_{ \{ v \in N_u^{(d_k)} \} } |\Delta X(v)|^2 \\
& =  \sum_{k=1}^K \sum_{u,v \in V} {\bf 1}_{ \{ u \in N_v^{(d_k)} \} } |\Delta X(v)|^2
= \sum_{k=1}^K \sum_{v \in V}  | N_v^{(d_k)} | \cdot |\Delta X(v)|^2
\le Kp \sum_{v \in V}  |\Delta X(v)|^2,
\end{align*}
where we used the assumption on $Kp$ to obtain the last $\le$. 
Then  \eqref{eq:bound-dy2-2} continues as 
\[
\le (\|a\|_2 \beta^{(1)})^2 Kp  \| \Delta X\|_{2,V}^2,
\]
which proves that $\| \Delta Y\|_{2,V} \le (\|a\|_2 \beta^{(1)}) \sqrt{Kp}  \| \Delta X\|_{2,V}$ as claimed.
\end{proof}

\begin{proof}[Proof of Theorem \ref{thm:stable-2}]
Same as in the proof of Theorem \ref{thm:stable-1}, we have \eqref{eq:bound-dY-1}. 
The eigen-decomposition $L_u^{(k)} = \Psi_u^{(k)} \Lambda_u^{(k)} (\Psi_u^{(k)})^T$
has that $(\Psi_u^{(k)})^T \Psi_u^{(k)} = I$,
 and, under the connectivity condition of the subgraph, the diagonal entries of $\Lambda_u^{(k)}$ all $ > 0$.
Thus 
\[
\langle u, v \rangle_{N_u^{(d_k)}} = \langle  (\Lambda_u^{(k)})^{1/2} \Psi_u^{(k)}u,  (\Lambda_u^{(k)})^{-1/2} \Psi_u^{(k)}v \rangle_{N_u^{(d_k)}},
\]
which gives the Cauchy-Schwarz with  weighted 2-norm as 
\begin{equation}\label{eq:CS-2}
| \langle B_k(\cdot, u), \Delta X(\cdot) \rangle_{N_u^{(d_k)}}| 
\le \|  B_k(\cdot, u) \|_{L_u^{(k)} }  \cdot \|  \Delta  X (\cdot) \|_{ (L_u^{(k)} )^{-1} }.
\end{equation}
Then similarly as in \eqref{eq:bound-dy2-2}, using the definition of $\beta^{(2)}$ and the the condition with $\rho$,
we obtain that 
\begin{align}
\sum_{u \in V}| \Delta Y(u)  |^2
  \le  (\|a\|_2 \beta^{(2)})^2 \sum_{u, k } \rho^2 \| \Delta  X (\cdot) \|_{2, N_u^{(d_k)} }^2,
 \label{eq:bound-dy2-3}
\end{align}
and the rest of the proof is the same, which gives that 
\[
\sum_{u \in V}| \Delta Y(u)  |^2 \le 
(\|a\|_2 \beta^{(2)})^2 \rho^2 Kp  \| \Delta X\|_{2,V}^2,
\]
which proves the claim. 
\end{proof}

\section{Up/down-wind Classification Experiment}\label{app:updown-detail}

\subsection{Dataset Setup}
We generate the Up/Down wind dataset on both ring graph and chain graph with 64 nodes. Every node is assigned to a probability drawn from $(0, 1)$ uniform distribution. Node with probability less than $threshold=0.1$ will be assigned with a gaussian distribution with $std=1.5$. Each gaussian distribution added is masked half side. Distribution masked left half is the 'Down Wind' class, distribution masked right half is the 'Up Wind' class, as shown in left plot in Fig. \ref{fig:updown}. We then 
sum up all half distributions from different locations in each sample. We generate 5000 training samples and 5000 testing samples.

\subsection{Model architecture and training details}

{\bf  Network architectures.}  

$\bullet$ 2-gcn-layer model: 

~~~ GraphConv(1,32)-ReLU-MaxPool1d(2)-GraphConv(32,64)-ReLU-AvgPool(32)-FC(2),

$\bullet$ 1-gcn-layer model:% (64 >  1):

~~~ GraphConv(1,32)-ReLU-AvgPool(64)-FC(2),

where GraphConv can be ChebNet or L3Net.

{\bf  Traning details.}

We choose the Adam Optimizer, batch size of 100, 
set initial learning rate of $1\times10^{-3}$, make it decay by 0.1 at 80 epoch and train for 100 epoches.

\subsection{Additional results}
We report additional results using 1-gcn layer architecture in Tab. \ref{tab:up-down-gcn-1}. 
Our L3Net again shows stronger classification performance than ChebNet.

\begin{table}[htb]
    \scriptsize
    \centering
    \caption{results of 1-gcn layer models}
    \begin{tabular}{c|c|c|c||c}
    \hline
    Gnn model & order & \#params & ring graph Acc & chain graph Acc\\
    \hline
    \multirow{4}*{ChebNet }  & L=3 & 0.2k & $50.80\pm0.24$ & $50.66\pm0.21$ \\
    ~                        & L=5 & 0.3k & $51.14\pm0.21$ & $51.07\pm0.35$ \\
    ~                        & L=9 & 0.4k & $51.68\pm0.38$ & $50.96\pm0.29$ \\
    ~                        & L=30 & 1.1k & $51.37\pm0.14$ & $50.70\pm0.16$ \\
    \hline
    \multirow{2}{*}{L3Net} & 1 & 0.3k & $99.96\pm0.08$ & $99.67\pm0.12$ \\
    ~                          & 0;1;2 & 0.8k & $\mathbf{99.96\pm0.01}$ & $\mathbf{99.92\pm0.01}$ \\
    \hline
    \end{tabular}
    \label{tab:up-down-gcn-1}
\end{table}

\section{Experimental Details}

\subsection{Classification of sphere mesh data}\label{app:detail-1}

{\bf  Spherical mesh}
We conduct this experiment on icosahedral spherical mesh \cite{baumgardner1985icosahedral}. Like S2CNN \cite{s2cnn}, we project digit image onto surface of unit sphere, and follow \cite{ugscnn} by moving projected digit to equator, avoiding coordinate singularity at poles.

Here, we details the subdivision scheme of the icosahedral spherical mesh we used. Start with an unit icosahedron, this sphere discretization progressively subdivide each face into four equal triangles, which makes this discretization uniform and accurate. Plus, this scheme provides a natural downsampling strategy for networks, as it denotes the path for aggregating information from higher-level neighbor nodes to lower-level center node. We adopt the following naming convention for different mesh resolution: start with level-0($L0$) mesh(i.e., unit icosahedron), each level above is associated with a subdivision. For level-$i(L_i)$, properties of sperical mesh are:
\begin{equation}
    N_e=30\cdot4*i, N_f=20\cdot4*i, N_v=N_e-N_f+2
\end{equation}
in which $N_f,N_e,N_v$ denote number of edges, faces, and vertices.

To give a direct illustration of how many nodes each level of mesh has, we list them below,
\begin{itemize}
    \item $L0$ 12 nodes
    \item $L1$ 42 nodes
    \item $L2$ 162 nodes
    \item $L3$ 642 nodes
    \item $L4$ 2562 nodes
    \item $L5$ 10242 nodes
\end{itemize}

{\bf  Network architectures}
We use a three-stage GNN model for this sphereMNIST, with each stage conduct convolution on spherical mesh of a specific level. Detailed architecture (suppose mesh levels used are $Li,Lj,Lk$):

Conv(1,16)$_{Li}$-BN-ReLU-DownSamp-ResBlock(16,16,64)$_{Lj}$-DownSamp-ResBlock(64,64,256)$_{Lk}$-AvgPool-FC(10),

We use the 4-stage model architecture for SphereModelNet-40, where 4 mesh levels are: $L5,L4,L3,L2$. Detailed architecture are:

Conv(6,32)$_{L5}$-BN-ReLU-DownSamp-ResBlock(32,32,128)$_{L4}$-DownSamp\\-ResBlock(128,128,512)$_{L3}$-DownSamp-ResBlock(512,512,2048)$_{L4}$-DownSamp-AvgPool-FC(40),

where the GraphConv(feat\_in, feat\_out) in above model architectures can be either Mesh Convolution layer or Graph Convolution layer, and ``ResBlock'' is a bottleneck module with two $1\times1$ convolution layers and one GraphConv layer.

{\bf Training Details}
For SphereMNIST experiments, we use batch size of 64, Adam optimizer, initial learning rate of 0.01 which decays by 0.5 every 10 epoches. We totally train model for 100 epoches.

For SphereModelNet-40 experiment, we batch size of 16, Adam optimizer, initial learning rate of 0.005 which decay by 0.7 every 25 epoches. We totally train 300 epoches.

{\bf  Results on fine mesh}

Tab. \ref{tab:spheremnist-fine} show the results of SphereMNIST and Sphere-ModelNet40 on fine meshes on the sphere. 
Specifically, 
the mesh used for SphereMNIST here is of levels $L4,L3,L2$, 
and the SphereModelNet-40 mesh of levels $L5,L4,L3,L2$,
same as in \cite{ugscnn}.

\begin{table}[h]
\scriptsize
\centering
\caption{Results on SphereMNIST and SphereModelNet-40 following setup in \cite{ugscnn}}
\begin{tabular}{c|c|c} 
\hline 
Model & \tabincell{c}{SphereMNIST\\Acc}  & \tabincell{c}{SphereModelNet-40\\Acc} \\
\hline  
S2CNN \cite{s2cnn} & 96.0 & 85.0 \\
UGSCNN \cite{ugscnn}            & 99.2 & \bf{90.50} \\
GCN                  & 95.8 & 87.07\\
\hline
ChebNet($L$=4) & \bf{99.3} & 88.05 \\
ChebNet($L$=5)                          & - & 88.90 \\
ChebNet($L$=6)                          & - & 88.70 \\
ChebNet($L$=7)                          & - & 88.78 \\
\hline
L3Net (${1123}$)               & 99 .10 & 90.24 \\
L3Net (${112}$)              & 98.90 & 89.67 \\
\hline
\end{tabular}
\label{tab:spheremnist-fine}
\end{table}

\subsection{Facial Expression Recognition}\label{app:detail-2}
{\bf Landmarks setting}
15 landmarks are selected from the standard 68 facial landmarks defined in AAM \cite{aam}, 
and edges are connected according to prior information of human face, e.g., nearby landmarks on the eye are connected,
see Fig. \ref{fig:diag1} (left).

{\bf Dataset setup}

$\bullet$ CK+:

The CK+ dataset  \cite{ck+_dataset}  is the mostly used laboratory-controlled FER dataset (downloaded from:\\ \href{http://www.jeffcohn.net/resources/}{\textit{http://www.jeffcohn.net/resources/}}). It contains 327 video sequences from 118 subjects with seven basic expression labels(anger, contempt, disgust, fear, happiness, sadness, and surprise). Every sequence shows a shift from neutral face to the peak expression. We extract the last three frames from each sequence in the CK+ dataset, form a dataset with 981 samples. Every facial image is aligned and resized to $(120, 120)$ with face alignment model \cite{facealignment}, and then we use this model again to get facial landmarks. As we describe in Sec. 4.2, we select 15 from 68 facial landmarks and build graph on them. The input feature for each node is an image patch centered at the landmark with size $(20,20)$, concatenated with the landmark's coordinates, so the total input feature dimension is 402. 

$\bullet$ FER13:

FER13 dataset  \cite{fer13_dataset} is a large-scaled, unconstrained database collected automatically by Goole Image API (downloaded from: \href{https://www.kaggle.com/c/challenges-in-representation-learning-facial-expression-recognition-challenge/data}{\textit{https://www.kaggle.com/c/challenges-in-representation-learning-facial-expression-recognition-challenge/data}}). It contains 28,709 training images, 3589 validation images and 3589 test images of size $(48, 48)$ with seven common expression labels as CK+. We align facial images, get facial landmarks, and select nodes \& build graph the same way as we do in CK+. Input features are local image patch centered at each landmark with size $(8, 8)$ and landmark's coordinates, so the total input feature dimension is 66.

{\bf  Network architectures.}

$\bullet$ CK+:

GraphConv(402,64)-BN-ReLU-GraphConv(64,128)-BN-ReLU-FC(7),

$\bullet$ FER13:

GraphConv(66,64)-BN-ReLU-GraphConv(64,128)-BN-ReLU-GraphConv(128,256)-BN-ReLU-FC(7),

where GraphConv(feat\_in, feat\_out) here can be any type of graph convolution layer, including our L3Net.

{\bf Training details.} 

$\bullet$ CK+:

We use 10-fold cross validation as \cite{fer_cnn_ding2017}. Batch size is set as 16, learning rate is 0.001 which decay by 0.1 if validation loss remains same for last 15 epoches. We choose Adam optimizer and train 100 epoches for each fold validation.

$\bullet$ FER13:

We report results on test set. Batch size is set as 32, learning rate is 0.0001 which decay 0.1 if validation loss remains same for last 20 epoches. We choose Adam optimizer and train models for 150 epoches. 

{\bf Runtime analysis details.} 
In section 4.2, we report the running time of our L3Net(order 1,1,2,3), 13.02ms, and best ChebNet, 12.56ms, on CK+ dataset, which are comparable. Here, we provide more details about this. The time we use to compare is the time of model finishing inference on validation set with batch size of 16. For each model, we record all validation time usages in all folds and report the average of them. The Runtime analysis is performed on a single NVIDIA TITAN V GPU.

\subsection{Skeleton-based Action Recognition}\label{app:detail-3}

\begin{figure}[t]
    \centering
    \includegraphics[width=0.4\linewidth]{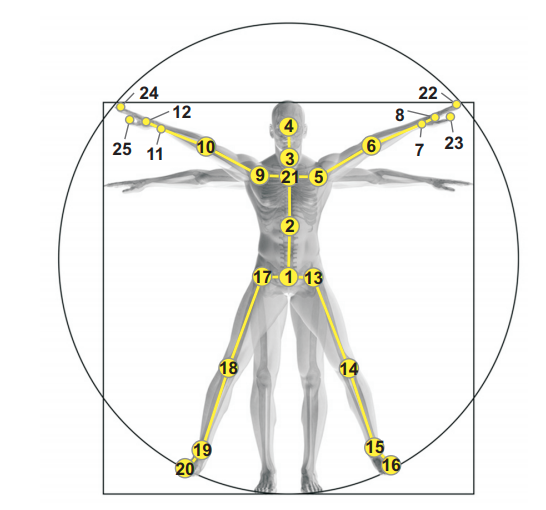}
    \caption{Illustration of 25-point body joints and graph.}
    \label{fig:body_Vitruvian_Man}
    %\vspace{-10pt}
\end{figure}

{\bf Dataset setup.}

$\bullet$ NTU-RGB+D:

NTU-RGB+D \cite{shahroudy2016ntu} is a large skeleton-based action recognition dataset with three-dimensional coordinates given to every body joint (downloaded from: \href{http://rose1.ntu.edu.sg/datasets/requesterAdd.asp?DS=3}{\textit{http://rose1.ntu.edu.sg/datasets/requesterAdd.asp?DS=3}}). It comprises 60 action classes and total 56,000 action clips. Every clip is captured by three fixed Kineticsv2 sensors in lab environment performed by one of 40 different subjects. Three sensors are set at same height but in different horizontal views, $-45^{\circ}, 0^{\circ}, 45^{\circ}$. There are 25 joints tracked, as shown in Fig. \ref{fig:body_Vitruvian_Man}. Two experiment setting are proposed by \cite{shahroudy2016ntu}, cross-view (X-view) and cross-subject (X-sub). X-view consists of 37,920 clips for training and 18960 for testing, where training clips are from sensor on $0^{\circ}, 45^{\circ}$, testing clips from sensor on $-45^{\circ}$. X-sub has 40,320 clips for training and 16,560 clips for testing, where training clips are from 20 subjects, testing clips are from the other 20 subjects. We test our model on both settings.

$\bullet$ Kinetics:

Kinetics \cite{kay2017kinetics} is a large and most commonly-used action recognition dataset with nearly 300,000 clips for 400 classes (downloaded from: \href{https://deepmind.com/research/open-source/kinetics}{\textit{https://deepmind.com/research/open-source/kinetics}}). We follow \cite{stgcn} to get 18-point body joints from each frame using OpenPose \cite{cao2017realtime} toolkit. Input features for each joint to the Network is $(x, y, p)$, in which $x,y$ are 2D coordinates of the joint, and $p$ is the confidence for localizing the joint. To eliminate the effect of skeleton-based model's inability to recognize objects in clips, we mainly focus on action classes that requires only body movements. Thus, we conduct our experiments on Kinetics-Motion, proposed by \cite{stgcn}. This is a small dataset that contains 30 action classes strongly related to body motion. Note that there are severe data missing problem in landmark coordinates in Kinetics data, so we also use our regularization scheme in this experiment.

{\bf Network Architectures.}

$\bullet$ NTU-RGB+D:

We follow the architecture in \cite{stgcn}:

STGraphConv(3,64,9,s1)-STGraphConv(64,64,9,s1)-STGraphConv(64,64,9,s1)-STGraphConv(64,64,9,s1)-STGraphConv(64,128,9,s2)-STGraphConv(128,128,9,s1)-STGraphConv(128,128,9,s1)-STGraphConv(128,256,9,s2)-STGraphConv(256,256,9,s1)-STGraphConv(256,256,9,s1)-STAvgPool-fc(60).

$\bullet$ Kinetics:

We also design a computation-efficient architecture for Kinetics-Motion with larger temporal downsampling rate, which results in less forward time:

STGraphConv(3,32,9,s2)-STGraphConv(32,64,9,s2)-STGraphConv(64,64,9,s1)-STGraphConv(64,64,9,s1)-STGraphConv(64,128,9,s2)-STGraphConv(128,128,5,s1)-STGraphConv(128,128,5,s1)-STGraphConv(128,256,5,s2)-STGraphConv(256,256,3,s1)-STGraphConv(256,256,3,s1)-STAvgPool-fc(60),

where the structure of STGraphConv(feat\_in, feat\_out, temporal\_kernel\_size, temporal\_stride) is:

GraphConv(feat\_in, feat\_out)-BN-ReLU-1DTemporalConv(feat\_out, feat\_out, temporal\_kernel\_size, temporal\_stride)-BN-ReLU.

{\bf Training Details}

$\bullet$ NTU-RGB+D:

We use batch size of 32, initial learning rate of 0.001 which decay by 0.1 at (30, 80) epoch, and total train 120 epoches. SGD optimizer is selected. We padding every sample temporally with 0 to 300 frames.

$\bullet$ Kinetics:

We use batch size of 32, initial learning rate of 0.01 which decay by 0.1 at (40, 80) epoch, and total train 100 epoches. SGD optimizer is selected. We padding every sample temporally with 0 to 300 frames, and during training, we perform data augmentation by randomly choosing 150 contiguous frames.

\subsection{Details of experiment on MNIST}\label{app:detail-4}

\subsubsection{Simulated graph noise on $7\times7$ MNIST.}

Here we describe three types of noise in our experiments:

{\bf Gaussian noise}.
Given a $7\times7$ image from MNIST, we sample 49 values from $\mathcal{N}(0, std)$. the $std$ controls the strength of noise added. We conduct experiments under $std={0.1,0.2,0.3}$ as shown in Tab. \ref{tab:mnist7_gaussian}.
The amount of noise is also measured by PNSR which is standard for image data.

{\bf Missing value noise.}
Given a image, we randomly sample 49 values from $U(0,1)$, and select nodes with probabilities less than a threshold. This threshold is called $noise\_level$, which controls the percentage of nodes affected. Then, we remove the pixel value at those selected nodes. Experiments with $noise\_level=0.1,0.2,0.3$ are conducted.

{\bf Graph node permutation noise}.
For each sample, we randomly select a permutation center node which has exact 4 neighbors. Then, we rotate its neighbors clockwise by 90 degree, e.g., top neighbor becomes right neighbor, and then we update the indices of permuted nodes.

\begin{table}[t]
\centering
\begin{minipage}[t]{0.45\textwidth}
\tiny
\centering
\caption{Results on MNSIT with grid size $28\times28$, }
\begin{adjustbox}{max width=\textwidth}
\begin{tabular}{c|c|c|c} 
\hline 
Model & bases order & \tabincell{c}{\#params\\(w/o FC)} & Acc\\
\hline  
\hline
GCN       & 1         & 2.4k & $93.30\pm0.12$ \\
\hline
\multirow{7}*{ChebNet} & $L$=3     & 6.5k & $93.93\pm0.18$\\
~                      & $L$=4   & 8.6k & $94.97\pm0.06$\\
~                      & $L$=5   & 10.7k & $95.87\pm0.09$\\
~                      & $L$=6   & 12.8k & $96.64\pm0.12$ \\
~                      & $L$=7   & 14.8k & $96.98\pm0.19$ \\
~                      & $L$=9   & 19.0k & $97.43\pm0.14$ \\
~                      & $L$=15  & 31.5k & $\mathbf{97.91\pm0.08}$ \\
~                      & $L$=20  & 41.9k & $97.90\pm0.04$ \\
\hline
\multirow{2}*{L3Net}  & 1;1;2     & 41.0k & $96.78\pm0.08$\\
~                      & 1;1;2;3   & 79.2k & $97.32\pm0.10$\\
\hline
\end{tabular}
\label{tab:mnist-28}
\end{adjustbox}
\end{minipage}
\vspace{0.03\textwidth}
\begin{minipage}[t]{0.45\textwidth}
\tiny
\caption{Results on MNSIT with grid size $14\times14$}
\begin{adjustbox}{max width=\textwidth}
\begin{tabular}{c|c|c|c} 
\hline 
Model & \tabincell{c}{bases\\order} & \tabincell{c}{\#params\\(w/o FC)} & Acc\\
\hline
\hline
GCN          & 1         & 2.4k  & $93.70\pm0.09$\\
\hline
\multirow{5}*{ChebNet} & $L$=3     & 6.5k  & $96.06\pm0.16$\\
~ & $L$=4   & 8.6k  & $96.85\pm0.11$\\
~ & $L$=5 & 10.7k & $97.24\pm0.28$\\
~ & $L$=6 & 12.8k & $97.58\pm0.10$ \\
~ & $L$=7 & 14.9k & $\mathbf{97.74\pm0.07}$ \\
\hline
\multirow{4}*{\bf{L3Net}} & 0;1;2   & 13.3k & $97.17\pm0.09$ \\    
~          & 1;1;2   & 14.8k & $97.24\pm0.12$\\
~          & 1;1;2reg0.001   & 14.8k & $97.43\pm0.07$\\
~          & 1;1;2;3   & 25.1k & $97.51\pm0.07$\\
\hline
\end{tabular}
\label{tab:mnist-14}
\end{adjustbox}
\end{minipage}
%\vspace{-10pt}
\end{table}

\begin{table}[H]
\scriptsize
\centering
\caption{Results on MNSIT with grid size $7\times7$ with different levels of missing value}
\begin{tabular}{c|c|c|c||c||c|c|c} 
\hline 
Model & \tabincell{c}{bases\\order} & reg & \tabincell{c}{\#params\\(w/o FC)} & Acc(original) & Acc(psnr 18.70) & Acc(psnr 15.33) & Acc(psnr 13.15)\\
\hline
\hline
GCN          & 1         & - & 2.4k  & $90.02\pm0.24$ & $83.44\pm0.15$ & $77.23\pm0.13$ & $71.67\pm0.06$ \\
\hline
\multirow{5}*{ChebNet} & $L$=3     & - & 6.5k  & $92.85\pm0.09$ & $87.09\pm0.18$ & $82.11\pm0.18$ & $76.15\pm0.26$ \\
~                       & $L$=4   & - & 8.6k  & $93.12\pm0.1$ & $87.09\pm0.16$ & $82.22\pm0.28$ & $75.95\pm0.22$\\
~                       & $L$=5 & - & 10.7k & $93.2\pm0.07$ & $87.01\pm0.14$ & $82.04\pm0.14$ & $76.21\pm0.38$\\
~                       & $L$=6 & - & 12.7k & $93.42\pm0.09$ & $87.20\pm0.3$ & $81.19\pm0.29$ & $75.24\pm0.32$\\
~                       & $L$=7 & - & 14.8k & $93.45\pm0.06$ & $87.08\pm0.11$ & $81.00\pm0.17$ & $75.31\pm0.34$\\
\hline
\multirow{4}*{\bf{L3Net}} & 1;1;2 & - & 8.4k  & $93.56\pm0.08$ & $86.64\pm0.16$ & $81.14\pm0.30$ & $75.07\pm0.08$ \\
~                       & 1;1;2     & 0.5 & 8.4k  & $93.85\pm0.13$ & $87.22\pm0.23$ & $\mathbf{82.84\pm0.11}$ & $\mathbf{76.48\pm0.23}$ \\
~                       & 1;1;2;3   & - & 12.2k  & $93.67\pm0.15$ & $86.51\pm0.38$ & $80.68\pm0.11$ & $74.24\pm0.36$ \\
~                       & 1;1;2;3   & 0.5 & 12.2k  & $\mathbf{93.85\pm0.15}$ & $\mathbf{87.22\pm0.08}$ & $82.64\pm0.31$ & $76.08\pm0.38$ \\
\hline
\end{tabular}
\label{tab:mnist7-speckle}
\end{table}

\subsubsection{Network architecture and training details}
We use the same architecture for different experiment settings:

GraphConv(1,32)-BN-ReLU-GraphConv(32,64)-BN-ReLU-FC(10),

where GraphConv can be different types of graph convolution layers.We set batch size to 100, use Adam optimizer, and set initial learning rate to 1e-3. Learning rate will drop by 10 if the least validation loss remains the same for the last 15 epoches. We set total training epoches as 200.

\subsubsection{Additional results}
Here, we show experiments results on $28\times28, 14\times14$ grid, as well as $7\times7$ grid with missing values. Tab. \ref{tab:mnist-28} shows results on $28\times28$ image grid. Our model have better performance than other methods. 

Tab. \ref{tab:mnist-14} shows results on $14\times14$ image grid, where our L3Net have comparable results with the best ChebNet \cite{chebnet} method.

We shows our results on $7\times7$ image grid with missing values in Tab. \ref{tab:mnist7-speckle}. With regularization, L3Net achieves the best performance in every experiment with different noise levels.

\end{document}